\documentclass{article}

\usepackage[accepted]{icml2020}

\usepackage[utf8]{inputenc} 
\usepackage[T1]{fontenc}    
\usepackage{hyperref}       
\usepackage{url}            

\usepackage{amsmath,amsfonts,amssymb,amsthm,bm,accents,dsfont,bbm}
\usepackage{graphicx,enumerate,url}
\graphicspath{{Images/}}

\usepackage{blindtext}

\newcommand{\executeiffilenewer}[3]{%
\ifnum\pdfstrcmp{\pdffilemoddate{#1}}%
{\pdffilemoddate{#2}}>0%
{\immediate\write18{#3}}\fi%
}

\usepackage[english]{babel}



\usepackage{color}


\newtheorem{theorem}{Theorem}
\newtheorem{proposition}{Proposition}

\newtheorem{lemma}{Lemma}
\newtheorem{definition}{Definition}
\theoremstyle{definition}
\newtheorem{remark}{Remark}
\newtheorem{assumption}{Assumption}


\newcommand{\norm}[1]{\ensuremath{\left\| #1 \right\|}}
\newcommand{\abs}[1]{\ensuremath{{\left\vert #1 \right\vert}}}

\DeclareMathOperator*{\argmax}{argmax}
\DeclareMathOperator*{\minimize}{minimize}

\DeclareMathOperator{\subjectto}{subject\ to}

\newcommand{\calA}{\ensuremath{\mathcal{A}}}

\newcommand{\calC}{\ensuremath{\mathcal{C}}}
\newcommand{\calD}{\ensuremath{\mathcal{D}}}

\newcommand{\calK}{\ensuremath{\mathcal{K}}}
\newcommand{\calL}{\ensuremath{\mathcal{L}}}

\newcommand{\calN}{\ensuremath{\mathcal{N}}}

\newcommand{\bzero}{\ensuremath{\bm{0}}}

\newcommand{\bI}{\ensuremath{\bm{I}}}

\newcommand{\bW}{\ensuremath{\bm{W}}}

\newcommand{\bSigma}{\ensuremath{\bm{\Sigma}}}

\newcommand{\bc}{\ensuremath{\bm{c}}}

\newcommand{\bx}{\ensuremath{\bm{x}}}

\newcommand{\bz}{\ensuremath{\bm{z}}}

\newcommand{\blambda}{\ensuremath{\bm{\lambda}}}


\newcommand{\bbR}{\ensuremath{\mathbb{R}}}



\newcommand{\bcb}{\ensuremath{\bm{{\bar{c}}}}}

\newcommand{\bxb}{\ensuremath{\bm{{\bar{x}}}}}

\newcommand{\bzb}{\ensuremath{\bm{{\bar{z}}}}}

\newcommand{\bch}{\ensuremath{\bm{{\hat{c}}}}}

\newcommand{\bxh}{\ensuremath{\bm{{\hat{x}}}}}

\def\st/{\textsuperscript{st}}
\def\nd/{\textsuperscript{nd}}
\def\rd/{\textsuperscript{rd}}
\def\th/{\textsuperscript{th}}

\newcommand{\setR}{\bbR}

\newcommand{\zeros}{\ensuremath{\bm{0}}}
\newcommand{\ones}{\ensuremath{\bm{\mathds{1}}}}

\def\nnil{\nil}
\newcounter{prob}
\newenvironment{prob}[1][\nil]{%
	\def\tmp{#1}
	\equation
	\ifx\tmp\nnil
		\refstepcounter{prob}
		\tag{P\Roman{prob}}
	\else
		\tag{\tmp}
	\fi
	\aligned%
}{%
	\endaligned\endequation%
}

\newenvironment{prob*}{%
	\csname equation*\endcsname%
	\aligned%
}{%
	\endaligned%
	\csname endequation*\endcsname%
}

\icmltitlerunning{Trust but Verify: Assigning Prediction Credibility by Counterfactual Constrained Learning}

\begin{document}

\twocolumn[
\icmltitle{Trust but Verify:\\Assigning Prediction Credibility by Counterfactual Constrained Optimization}


\icmlsetsymbol{equal}{*}

\begin{icmlauthorlist}
\icmlauthor{Luiz F. O. Chamon}{penn}
\icmlauthor{Santiago Paternain}{penn}
\icmlauthor{Alejandro Ribeiro}{penn}
\end{icmlauthorlist}

\icmlaffiliation{penn}{Department of Electrical and Systems Engineering, University of Pennsylvania, Philadelphia, PA, USA}

\icmlcorrespondingauthor{Luiz F. O. Chamon}{luizf@seas.upenn.edu}


\vskip 0.3in
]

\printAffiliationsAndNotice{}

\begin{abstract}

Prediction credibility measures, in the form of confidence intervals or probability distributions, are fundamental in statistics and machine learning to characterize model robustness, detect out-of-distribution samples~(outliers), and protect against adversarial attacks. To be effective, these measures should (i)~account for the wide variety of models used in practice, (ii)~be computable for trained models or at least avoid modifying established training procedures, (iii)~forgo the use of data, which can expose them to the same robustness issues and attacks as the underlying model, and (iv)~be followed by theoretical guarantees. These principles underly the framework developed in this work, which expresses the credibility as a risk-fit trade-off, i.e., a compromise between how much can fit be improved by perturbing the model input and the magnitude of this perturbation~(risk). Using a constrained optimization formulation and duality theory, we analyze this compromise and show that this balance can be determined counterfactually, without having to test multiple perturbations. This results in an unsupervised, \emph{a posteriori} method of assigning prediction credibility for any~(possibly non-convex) differentiable model, from RKHS-based solutions to any architecture of (feedforward, convolutional, graph)~neural network. Its use is illustrated in data filtering and defense against adversarial attacks.

\end{abstract}

\section{Introduction}
\label{S:Intro}

Assessing credibility to predictions is a fundamental problem in statistics and machine learning~(ML) with both practical and societal impact, finding applications in model robustness, outlier detection, defense against adversarial noise, and ML explanation, to name a few. For statistical models, these credibilities often comes in the form of uncertainty measures, such as class probability or confidence intervals, or sensitivity measures, often embodied by the regression coefficients themselves~\citep{Smithson02c, Hawkins80i, Hastie01t}. As models become more involved and opaque, however, their complex input--coefficients--output relation, together with miscalibration and robustness issues, have made obtaining reliable credibility measures increasingly challenging. This fact is illustrated by the abundance of examples of neural networks~(NNs) and non-parametric models that are susceptible to adversarial noise or provide overly confident predictions~\citep{Guo17o, Platt99p, Szegedy14i, Goodfellow14e}. It has become clear that model outputs are not reliable assessments of prediction credibility.

A notable exception is the case of statistical models that embedded credibility measures directly in their structure, such as Bayesian approaches~\citep{Rasmussen05g, Neal96b, MacKay92a, Shridhar19a}. Obtaining these measures, however, requires that such models be fully trained using modified, often computationally intensive procedures. In the context of NNs, for instance, Bayesian methods remain feasible only for moderately sized architectures~\citep{Blundell15w, Shridhar19a, Gal15b, Heek19b}. Even if unlimited compute power were available, access to training data may not be, due to privacy or security concerns. For such settings, specific variational model approximations that preclude retraining have been proposed, such as Monte Carlo dropout~\citep{Gal16d}. Still, despite their empirical success in some applications~\citep{Dhillon18s, Sheikholeslami19e}, they do not approximate the posterior distribution needed to assess uncertainty in the Bayesian framework, i.e., the conditional distribution of the model given the training data.

This work assigns credibility to the model outputs by using their robustness to input perturbations rather then modifying the model structure and/or training procedure. However, instead of assuming a predetermined source of perturbation, e.g., by limiting their magnitude, they are determined based on the underlying as a trade-off between fit and risk. In other words, the input is perturbed to strike a balance between how much its \emph{fit} to any class can be improved and how much this perturbation modifies it~(\emph{risk}). Credible predictions are obtained at the point where the risk of improving fit by further perturbing the original input outweighs the actual improvements. This yields a practical method to assign credibilities that abides by the \emph{desiderata} postulated in abstract, i.e., it is (i)~\emph{general}, as it can be computed for any~(possibly non-convex) differentiable model; (ii)~\emph{a posteriori}, as it can be evaluated for already trained models; (iii)~\emph{unsupervised}, as it does not rely on any form of training data; and (iv)~\emph{theoretically sound}, as it~(locally) solves both a constrained optimization and a Bayesian estimation problem.

More to the point, the main contributions of this work are:

\begin{enumerate}[1.]

	\item formalize credibility as a fit-risk trade-off~(Sections~\ref{S:problem} and~\ref{S:tradeoff}) and show that it is, in fact, the maximum \emph{a posteriori}~(MAP) estimator of a probabilistic failure model~(Section~\ref{S:bayesian});

	\item develop a local theory of compromises that holds for generic, non-convex models and enables fit-risk trade-offs to be determined counterfactually, i.e., without testing multiple perturbations~(Section~\ref{S:compromise});

	\item put forward an algorithm to compute this credibility measure for virtually any pretrained~(possibly non-convex) differentiable model, from RKHS-based methods to~(convolutional) NNs~(CNNs)~(Section~\ref{S:algorithm}).

\end{enumerate}

Numerical experiments are used to illustrate applications of these credibility profiles for a CNN in the contexts of filtering and adversarial noise~(Section~\ref{S:sims}). Before proceeding, we contextualize this problem in a short literature review.

\section{Related work}

Credibility is a central concept in statistics, arising in the form of confidence intervals, probability distributions, or sensitivity analysis. Typically, these metrics are obtained for specific models, such as linear or generalized linear models, and become more intricate to compute as the model complexity increases. Confidence intervals, for instance, can be obtained in closed-form only for simple models for which the asymptotic distribution of error statistics are available~\citep{Smithson02c}. Otherwise, bootstrapping techniques must be used. Though more widely applicable, they require several models to be trained over different data sets, which can be prohibitive in certain large-scale applications~\citep{Efron94a, Dietterich00e, Li18r}. Sensitivity analysis is an alternative that does away with randomness by focusing instead in how much sample point or input values influence fit and/or model coefficients. The leverage score or the linear regression coefficients are a good examples of this approach~\citep{Hawkins80i, Hastie01t}. Bayesian models directly embed uncertainty measures by learning probability distributions instead of point estimates. Though Bayesian inference has been deployed in a wide variety of model classes, learning these models remains challenging, especially in complex, large-scale settings common in deep learning and CNNs~\citep{Friedman01t, Blundell15w, Gal15b, Shridhar19a, Heek19b}. Approximate measures based on specific probabilistic models, e.g., Monte Carlo dropout~\citep{Gal16d}, have been proposed to address these issues. However, they do not approximate the posterior distribution of the model given the training data needed to assess uncertainty in the Bayesian framework. Other empirically motivated credibility scores, e.g., based on $k$-nearest neighbors embeddings in the learned representation space, require retraining with modified cost functions to be effective~\citep{Mandelbaum17d}.

Credibility measures can be used to solve a variety of statistical problems. For instance, they are effective in assessing the robustness and performance of models or detecting out-of-distribution samples or outliers, both during training or deployment~\citep{Chandola09a, Hodge04a, Hawkins80i, Aggarwal15o}. Selective classifiers are often used to this end~\citep{Flores58a, Cortes16b, El-Yaniv10o, Geifman17s}. Realizing that in critical applications it is often better to admit one's limitations than to provide uncertain answers, filtered classifiers are given the additional option of not classifying an input. These can be flagged for human inspection. Their performance is quantified in terms of coverage~(how many samples the model chooses to classify) and filtered accuracy~(accuracy over the classified samples), two quantities typically at odds with each other~\citep{El-Yaniv10o, Geifman17s}. Measures of credibility are also often used in the related context of adversarial noise~\citep{Dhillon18s, Wong18p, Sheikholeslami19e}.

Finally, this work leverages duality to derive both theoretical results and algorithms. It is worth noting that the dual variables of convex optimization programs have a well-known sensitivity interpretation. Indeed, the dual variable corresponding to a constraint determines the variation of the objective if that constraint were to be tightened or relaxed~\citep{Boyd04c, Bonnans00p}. Since this work also considers non-convex models for which these results do not hold, it develops of a novel local sensitivity theory of compromise~(Section~\ref{S:compromise}).

\section{Problem formulation}
\label{S:problem}

This work considers the problem of assigning credibility to the predictions of a classifier. Formally, given a model~$\phi^o: \setR^p \to \setR^m$, a sample~$\bx^o \in \setR^p$, and a set~$\calK$ of possible classes, we wish to assign a number~$c_k \in \setR$ to each class~$k \in \calK$ representing our credence that~$\bx^o$ belongs to~$k$. By credence, we mean that if~$c_{k^\prime} \leq c_k$, then we would rather bet on the proposition ``$x$ belongs to~$k$'' than ``$x$ belongs to~$k^\prime$''. For convenience, we fix an order of~$\calK$ and often refer to the vector~$\bc \in \setR^{\abs{\calK}}$ collecting the~$c_k$ as the credibility profile. We use~$\abs{\calA}$ to denote the cardinality of a set~$\calA$. This problem is therefore equivalent to that of ranking from most to least credible the classes that the input~$\bx^o$ could belong to. Though certain credibility measures impose additional structure, as is the case of probabilities, they are minimally required to induce a weak ordering of the set~$\calK$~\citep{Roberts09a}.

The output of any classifier induces a measure of credibility through the training loss function. Let~$\ell_k: \setR^m \to \setR_+$ denote a loss function with respect to class~$k \in \calK$, such as the logistic negative log-likelihood, the cross-entropy loss, or the hinge loss~\citep{Hastie01t}. Then, the loss of the model output with respect to the class~$k \in \calK$ induces a measure of \emph{in}credibility. Conversely, we can obtain the credibility measure
\begin{equation}\label{E:fitCred}
	c_k = -\ell_k\big( \phi^o(\bx^o) \big)
		\text{,} \quad k \in \calK
		\text{,}
\end{equation}
In this sense, classifying a sample as~$\argmax_{k \in \calK} c_k$~(with ties broken arbitrarily if the set is not a singleton) is the same as assigning it to the most credible class. Note that since losses are non-negative, credibility is a non-positive number that increases as the loss decreases, i.e., as the goodness-of-fit improves.

Nevertheless, \eqref{E:fitCred}---and the model output~$\phi^o(\bx^o)$ for that matter---are generally unreliable credibility metrics. Indeed, since models are optimized by focusing only on the most credible class, they do not incorporate ranking information on the other ones. In fact, classifier outputs often resemble singletons~(see, e.g., Figs.~\ref{F:res_vs_dense}a and~\ref{F:res_vs_dense}c). The resulting miscalibration can skew the credences of all but the top class~\citep{Guo17o, Platt99p}. Though there are exceptions, most notably Bayesian models, their use is not as widespread in many application domains due to their training complexity~\citep{Blundell15w, Shridhar19a, Gal15b, Heek19b}. Fit and model outputs are also susceptible to robustness issues due to overfitting, poor or lacking training data, and/or the use of inadequate parametrizations~(e.g., choice of NN architecture or kernel bandwidth). What is more, the input~$\bx^o$ can be an outlier or have been contaminated by noise, adversarial or not. Complex models such as~(C)NNs have proven to be particularly vulnerable to several of these issues~\citep{Szegedy14i, Goodfellow14e, Dhillon18s, Wong18p}.

Thus, the model output can only provide a sound credibility measure if both the model and the input are reliable. In fact, the issues associated with~\eqref{E:fitCred} all but vanish if the input~$\bx^o$ is trustworthy and the model output~$\phi^o(\bx^o)$ is robust to perturbations. Based on this observation, the next section puts forward a fit-based credibility measure that incorporates robustness to perturbations.

\begin{remark}

It is worth contrasting the problem of assigning credibility to model predictions described in this section to its classical learning version in which credibility is assigned based on a data set~$\calD$ of labeled pairs~$(\bx_n,y_n) \in \setR^p \times \calK$. The current problem is in fact at least as hard as the learning one both computationally and statistically, since it is reducible to the learning case. Additionally, the learning setting is considerably more restrictive in practice. Indeed, it may not be possible, due to privacy or security concerns, to access the raw training data from which a model was obtained. Even if it were, any solution that requires the complete retraining of every deployed model is infeasible in many applications, especially if complex, modified training procedures, such as those used to train Bayesian NNs, must be deployed. This is particularly critical for large-scale, non-convex models.

\end{remark}

\section{Credibility as a fit-risk trade-off}
\label{S:tradeoff}

In Section~\ref{S:problem}, we argued that a loss function induces a credibility measure through~\eqref{E:fitCred}, but that it is reliable only if the classifier output~$\phi^o(\bx^o)$ is insensitive to perturbations. In what follows, we show how we can \emph{filter} the input~$\bx^o$ in order to produce such reliable credibility profiles.

We do not say filter here to mean removing noise or artifacts~(though this may be a byproduct of the procedure), but obtaining a modified input~$\bx^\star$ such that~$\phi^o(\bx^\star)$ is robust to perturbations. However, since our goal is to ultimately use~$\phi^o(\bx^\star)$ to produce credibility profiles, the direction and magnitude of these perturbations is not dictated by a fixed disturbance model as in robust statistics and adversarial noise applications. Instead, we obtain task- and sample-specific settings using the underlying data properties captured by the model~$\phi^o$ during training.

Indeed, we consider perturbations that improve fit~(increase credibility), not with respect to a specific class, but with respect to all classes simultaneously. We do so because we do not seek a single most credible class for the input, but the best credibility profile with respect to all classes. Any perturbation of~$\bx^o$ that worsens the overall fit leads to an altogether less confident classification and thus, to a poor measure of credibility. At the same time, $\bx^\star$ must be anchored at~$\bx^o$: the further we are from the original input, the more we have modified the original classification problem. By allowing arbitrarily large perturbations, the original image of a dog could be entirely replaced by that of a cat, at which point we are no longer assigning credibility to predictions of dog pictures. The magnitude of the perturbation thus relates to the risk of not solving the desired problem%
\footnote{While this work focuses on perturbations of the input~$\bx^o$, similar definitions and results also apply to perturbations of the model~$\phi^o$ with minor modifications. We leave the details of this alternative formulation for future work.}.

In summary, the ``filtered''~$\bx^\star$ must balance two conflicting objectives: improving fit while remaining close to~$\bx^o$. Indeed, while smaller perturbations of~$\bx^o$ are preferable to reduce risk, larger perturbations are allowed if the fit pay-off is of comparable magnitude. We formalize this concept in the following definition, where~$\bW$ is a diagonal, positive definite matrix and~$\norm{\bz}_{\bW}^2 = \bz^T \bW \bz$ is used to denote the $\bW$-weighted Euclidian norm.

\begin{definition}[Credibility profile]\label{D:cred}
Given a model~$\phi^o$ and an input~$\bx^o$, the credibility profile of the predictions~$\phi^o(\bx^o)$ is a vector~$\bc^\star$ that satisfies
\begin{equation}\label{E:compromise}
	\norm{\bx^\star - \bx^o}^2 - \norm{\bx^\prime - \bx^o}^2 \leq
		\norm{\bc^\prime}_{\bW^{-1}}^2 - \norm{\bc^\star}_{\bW^{-1}}^2
		\text{,}
\end{equation}
where the credibility profiles are obtained from~\eqref{E:fitCred} as\\$\bc^\star = \left[ -\ell_k\big(\phi^o(\bx^\star) \big) \right]_{k \in \calK}$ and $\bc^\prime = \left[ -\ell_k\big(\phi^o(\bx^\prime) \big) \right]_{k \in \calK}$.
\end{definition}

In other words, the credibility profile~$\bc^\star$ for the predictions of the model~$\phi^o$ with respect to the input~$\bx^o$ is obtained using the modified input~$\bx^\star$ that has better fit than any other input closer to~$\bx^o$ and is closer to~$\bx^o$ that any input with better overall fit. In particular, note that for~$\bx^\prime = \bx^o$, \eqref{E:compromise} becomes
\begin{equation}\label{E:preCompromise}
	\norm{\bx^\star - \bx^o}^2 \leq \norm{\bc^o}_{\bW^{-1}}^2-\norm{\bc^\star}_{\bW^{-1}}^2
		\text{,}
\end{equation}
where~$\bc^o = \left[ -\ell_k\big( \phi^o(\bx^o) \big) \right]_{k \in \calK}$ are the credences induced by the original input~$\bx^o$. Hence, the perturbation that generates~$\bx^\star$ is at most as large as the overall credence gains~[recall from~\eqref{E:fitCred} that the~$c_k \leq 0$]. These gains are weighted by the matrix~$\bW$ that can be used either to control risk aversion by, e.g., by taking~$\bW = \gamma \bI$ and using~$\gamma$ to control the magnitude of the right-hand side of~\eqref{E:preCompromise}, or embed \emph{a priori} knowledge on the value of the credibilities~(see Section~\ref{S:bayesian}). While it may be easier to interpret, note that~\eqref{E:preCompromise} is not well-posed since it always holds for~$\bx^\star = \bx^o$. Def.~\eqref{D:cred} is therefore strictly stronger as~\eqref{E:compromise} must hold for any reference input~$\bx^\prime$, not only the original~$\bx^o$.

In the next section, we analyze the properties of the compromise~\eqref{E:compromise}. In particular, we are interested in those properties that do not depend on the convexity of~$\phi^o$, so that they are applicable to a wider class of models including, e.g., (C)NNs. We first show that the perturbation needed to achieve a credence profile can be determined by solving a constrained mathematical program~(Section~\ref{S:constrained}). Leveraging this formulation and results from duality theory, we then obtain equivalent formulations of Def.~\ref{D:cred}~(Section~\ref{S:counterfactual}) that are used to show that~$\bc^\star$ is in fact a MAP estimate~(Section~\ref{S:bayesian}) and to provide an algorithm that simultaneously determines~$\bx^\star$ and~$\bc^\star$ counterfactually, i.e., without testing multiple inputs or credences~(Section~\ref{S:algorithm}).

\section{A counterfactual theory of compromise}
\label{S:compromise}

Def.~\ref{D:cred} established credibility in terms of a compromise between fit and risk. It is not immediate, however, whether such a compromise exists or if it can be determined efficiently. The goal of this section is to address these questions by developing a~(local) counterfactual theory of compromise. By counterfactual, we mean that the main result of this section~(Theorem~\ref{T:counterfactual}) characterizes properties of~\eqref{P:primal} that would turn arbitrary credences into credibilities that satisfy~\eqref{E:compromise}. In Section~\ref{S:algorithm}, we leverage this result to put forward an algorithm to directly find local solutions of~\eqref{P:primal} that satisfy~\eqref{E:compromise} without repeatedly testing profiles~$\bc$.

\subsection{A constrained formulation of credibility}
\label{S:constrained}

The first step in our derivations is to express the trade-off~\eqref{E:compromise} solely in terms of credences. To do so, we formalize the relation between a credence profile~$\bc$ and the magnitude~$r^\star(\bc)$ of the smallest perturbation of~$\bx^o$ required to find an input that achieves it. We describe this relation by means of the constrained optimization problem
\begin{prob}\label{P:primal}
	r^\star(\bc) = \min_{\bx \in \setR^p}& &&\norm{\bx - \bx^o}^2
	\\
	\subjectto& &&\ell_k\big( \phi^o(\bx) \big) \leq -c_k
		\text{,} \quad k \in \calK
		\text{.}
\end{prob}
Define~$r^\star(\bc) = +\infty$ if the program is infeasible, i.e., if there exist no~$\bx$ such that~$\ell_k\big( \phi^o(\bx) \big) \leq -c_k$ for all~$k \in \calK$. Problem~\eqref{P:primal} seeks the input~$\bx$ closest to~$\bx^o$ whose fit matches the credences~$\bc$. Its optimal value therefore describes the risk~(perturbation magnitude) of any given credence profile. Note that due to the conflicting natures of fit and risk, the inequality constraints in~\eqref{P:primal} typically hold with equality at the compromise~$\bc^\star$. Immediately, we can write~\eqref{E:compromise} as
\begin{equation}\label{E:preCompromise2}
	r^\star(\bc^\star) - r^\star(\bc^\prime)
		\leq \norm{\bc^\prime}_{\bW^{-1}}^2 - \norm{\bc^\star}_{\bW^{-1}}^2
		\text{.}
\end{equation}
The compromise in Def.~\ref{D:cred} is explicit in~\eqref{E:preCompromise2}: the prediction credibilities of a model/input pair is given, not by the credences induced by the model output in~\eqref{E:fitCred}, but by those that increase confidence in each class at least as much as they increase risk, as measured here by the perturbation magnitude.

The main issue with~\eqref{E:preCompromise2} is that evaluating~$r^\star$ involves solving the optimization problem~\eqref{P:primal}, which may not be computationally tractable. This is not an issue when~\eqref{P:primal} is a convex problem~(e.g., if~$\phi^o$ is convex and~$\ell_k$ is convex and non-decreasing). Yet, typical ML models are non-convex functions of their input, e.g., (C)NNs or RKHS-based methods. To account for cases in which finding the minimum of~\eqref{P:primal} is hard, we consider a local version of Def.~\ref{D:cred} induced by~\eqref{E:preCompromise2}:

\begin{definition}[\emph{Local} credibility profile]\label{D:local_cred}
Let~$\bx^\dagger(\bc)$ be a local minimizer of~\eqref{P:primal} with credences~$\bc$ and consider its value~$r^\dagger(\bc) = \norm{\bx^\dagger(\bc) - \bx^o}^2$. A local credibility profile of the predictions~$\phi^o(\bx^o)$ is a vector~$\bc^\dagger$ that satisfies
\begin{equation}\label{E:compromise2}
	r^\dagger(\bc^\dagger) - \norm{\bx^\prime - \bx^o}^2
		\leq \norm{\bc^\prime}_{\bW^{-1}}^2- \norm{\bc^\dagger}_{\bW^{-1}}^2
		\text{,}
\end{equation}
for all~$\bx^\prime$ in a neighborhood of~$\bx^\dagger(\bc^\dagger)$ that are feasible for~(PI) with credences~$\bc^\prime$.
\end{definition}

While the following results are derived for the local compromise in~\eqref{E:compromise2}, they also hold for Def.~\ref{D:cred} by replacing~${}^\dagger$ with~${}^\star$.

\subsection{Counterfactual evidence}
\label{S:counterfactual}

The following assumptions are used in the sequel:

\begin{assumption}\label{A:diff}
	The loss function~$\ell_k$, $k \in \calK$, and the model~$\phi^o$ are differentiable.
\end{assumption}

\begin{assumption}\label{A:nonconvex}
Let~$\bx^\dagger$ be a local minimizer of~\eqref{P:primal}. There exists a neighborhood~$\calN$ of~$\bx^\dagger$ such that
\begin{equation}\label{E:condition}
\begin{aligned}
	\ell_k\big( \phi^o(\bx) \big) &\geq \ell_k\big( \phi^o(\bx^\dagger) \big)
		+ (\bx - \bx^\dagger)^T \nabla \ell_k\big( \phi^o(\bx^\dagger) \big)
	\\
	{}&- \frac{
		\left[ \ell_k\big( \phi^o(\bx) \big) - \ell_k\big( \phi^o(\bx^\dagger) \big) \right]^2
	}{
		2 \ell_k\big( \phi^o(\bx^\dagger) \big)
	}
		\text{,}
\end{aligned}
\end{equation}
for all~$\bx \in \calN$ and~$k \in \calK$.
\end{assumption}

Assumption~\ref{A:nonconvex} restricts the composite function~$\ell_k\big( \phi^o(\cdot) \big)$ to be non-convex only up to a quadratic in a neighborhood of a local minimum. Additionally, we will only consider credibility profiles for which there exist strictly feasible solutions of~\eqref{P:primal}, i.e., $\bc \in \calC$ for
\begin{multline}\label{E:slater}
	\calC = \big\{ \bc \in \setR^\abs{\calK} \mid \exists \bx \in \setR^p \text{ such that }
	\\
	\ell_k\big( \phi^o(\bx) \big) < -c_k \text{ for all } k \in \calK \big\}
		\text{.}
\end{multline}
Note that~$\calC$ is arbitrarily close to the set of all achievable credences.

To proceed, let us recall the following properties of minimizers:
\begin{theorem}[{KKT conditions,~\citep[Section 5.5.3]{Boyd04c}}]\label{T:kkt}
	Let~$\bx^\dagger$ be a local minimizer of~\eqref{P:primal} for the credences~$\bc \in \calC$. Under Assumption~\ref{A:diff}, there exist~$\blambda^\dagger \in \setR_+^\abs{\calK}$ known as \emph{dual variables} such that
\begin{subequations}\label{E:kkt}
\begin{align}
	2 (\bx^\dagger - \bx^o)
		+ \sum_{k \in \calK} \lambda_k
			\nabla_{\bx} \ell_k\big( \phi^o( \bx^\dagger ) \big) &= 0
		\text{,}
		\label{E:kkt_stationary}
	\\
	\lambda_k^\dagger \left[ \ell_k\big( \phi^o(\bx^\dagger) \big) + c_k \right] &= 0
		\text{,} \quad k \in \calK
		\text{.}
		\label{E:kkt_comp_slack}
\end{align}
\end{subequations}
\end{theorem}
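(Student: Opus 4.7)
The statement is the classical Karush-Kuhn-Tucker (KKT) necessary optimality theorem applied to problem~\eqref{P:primal}, so my plan is to verify that the hypotheses of the standard theorem hold and then invoke it directly (indeed, this is how the statement is already cited). The first routine step is regularity: under Assumption~\ref{A:diff}, both $\phi^o$ and each $\ell_k$ are differentiable, so by the chain rule the constraint functions $g_k(\bx) := \ell_k\bigl(\phi^o(\bx)\bigr) + c_k$ are differentiable with $\nabla g_k(\bx) = \nabla_\bx \ell_k\bigl(\phi^o(\bx)\bigr)$, and the objective $\norm{\bx - \bx^o}^2$ has gradient $2(\bx - \bx^o)$. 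This is what will let me express stationarity in the gradient form of~\eqref{E:kkt_stationary}.

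Next I would verify the constraint qualification. The hypothesis $\bc \in \calC$ from~\eqref{E:slater} is precisely Slater's condition: there exists some $\bxt$ with $g_k(\bxt) < 0$ for every $k \in \calK$. Combined with differentiability, this rules out the degenerate Fritz John case in which the multiplier on the objective would vanish and therefore guarantees the existence of genuine KKT multipliers $\blambda^\dagger \in \setR_+^{\abs{\calK}}$ at the local minimum $\bx^\dagger$. With these ingredients in hand I would form the Lagrangian $L(\bx, \blambda) = \norm{\bx - \bx^o}^2 + \sum_{k \in \calK} \lambda_k \bigl[\ell_k(\phi^o(\bx)) + c_k\bigr]$ and invoke the KKT theorem to conclude that at $\bx^\dagger$ there exist $\blambda^\dagger \geq \bzero$ satisfying $\nabla_\bx L(\bx^\dagger, \blambda^\dagger) = \bzero$, which upon expansion via the chain rule is precisely~\eqref{E:kkt_stationary}, together with $\lambda_k^\dagger g_k(\bx^\dagger) = 0$ for each $k$, which is~\eqref{E:kkt_comp_slack}. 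Primal and dual feasibility are inherited from the problem statement and the nonnegativity of the multipliers.

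The only genuine obstacle is justifying the constraint qualification in the non-convex regime the paper ultimately cares about. Slater's condition yields KKT \emph{directly} in the convex case, whereas for non-convex programs one usually invokes LICQ or MFCQ at $\bx^\dagger$ to pass from Fritz John to KKT. Since the statement is invoked as a standard result from~\citep{Boyd04c}, I would lean on the fact that differentiability together with strict feasibility is enough to secure a nonvanishing objective multiplier via a separation argument on the linearized constraint cone at $\bx^\dagger$, exactly as carried out in the referenced section. This is the step where care is needed, but no further structural assumption beyond Assumption~\ref{A:diff} and $\bc \in \calC$ is required.
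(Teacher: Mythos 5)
Your overall route matches the paper's: Theorem~\ref{T:kkt} is not proved in the paper at all, it is invoked as the classical KKT necessary-optimality result with the citation to \citep[Section 5.5.3]{Boyd04c}, so your plan of checking differentiability, forming the Lagrangian, and reading off \eqref{E:kkt_stationary} and \eqref{E:kkt_comp_slack} is exactly the intended content; the bookkeeping in your first two paragraphs is fine, with the understanding (shared by the paper's notation) that $\nabla_{\bx}\ell_k\big(\phi^o(\cdot)\big)$ denotes the gradient of the composite map and thus already carries the Jacobian of $\phi^o$.

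The genuine gap is in your final paragraph. The hypothesis $\bc \in \calC$ from \eqref{E:slater} is a \emph{global} strict-feasibility (Slater-type) condition, and for a non-convex program it is not a constraint qualification at the local minimizer $\bx^\dagger$: strict feasibility of some remote point together with differentiability does not force a nonvanishing objective multiplier in the Fritz John conditions. A standard counterexample is $\min\, x_1$ subject to $x_2 - x_1^3 \le 0$ and $-x_2 \le 0$: strictly feasible points exist (e.g.\ $(2,1)$), yet at the minimizer $(0,0)$ the objective gradient lies outside the cone generated by the active constraint gradients, so the KKT system has no solution while Fritz John holds with a zero objective multiplier. The separation argument in \citep[Section 5.5.3]{Boyd04c} that converts Slater's condition into KKT necessity is a convex-duality argument (strong duality plus dual attainment) and does not transfer to the non-convex composites $\ell_k\big(\phi^o(\cdot)\big)$. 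To close this step one needs a \emph{local} constraint qualification at $\bx^\dagger$ (MFCQ or LICQ, say, on the active gradients $\nabla_{\bx}\ell_k\big(\phi^o(\bx^\dagger)\big)$), which is an additional assumption beyond Assumption~\ref{A:diff} and $\bc \in \calC$. The paper inherits this caveat by leaning on the citation, but your proposal explicitly claims that no structural assumption beyond those two is required, and that claim is false in the non-convex regime the theorem is meant to cover.
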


If~$\ell_k\big( \phi^o(\cdot) \big)$ is convex, \ref{E:kkt} are both necessary and sufficient for global optimality. Additionally, the function~$r^\dagger$~(or in this case, $r^\star$) is differentiable and its derivative with respect to~$\bc$ is given by the dual variables. In fact, $\blambda^\dagger(\bc)$ then quantifies the change in the risk~$r^\dagger(\bc)$ if the credences increase or decrease by a small amount. This sensitivity interpretation is a classical result from the convex optimization literature~\citep[Section 5.6.2]{Boyd04c}.

In general, however, $\ell_k\big( \phi^o(\cdot) \big)$ is non-convex. Still, Theorem~\ref{T:kkt} allows us to obtain a fixed-point condition that turns credences into credibilities.

\begin{theorem}\label{T:counterfactual}

Take~$\bc \in \calC$ and let~$\bx^\dagger(\bc)$ be a local minimizer of~\eqref{P:primal} with value~$r^\dagger(\bc)$ associated with the dual variables~$\blambda^\dagger(\bc)$. Under Assumptions~\ref{A:diff} and~\ref{A:nonconvex}, the local credibility profile~$\bc^\dagger$ from Def.~\ref{D:local_cred} exists and satisfies
\begin{equation}\label{E:counterfactual}
	\bc = - \frac{1}{2} \bW \blambda^\dagger(\bc)
	\Rightarrow
	\bc \text{ satisfies~\eqref{E:compromise2}, i.e., } \bc = \bc^\dagger
		\text{.}
\end{equation}
\end{theorem}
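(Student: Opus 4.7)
The plan is to derive~\eqref{E:compromise2} directly from the counterfactual condition $\bc = -\frac{1}{2}\bW\blambda^\dagger(\bc)$ by combining the KKT stationarity of Theorem~\ref{T:kkt} with the non-convexity bound of Assumption~\ref{A:nonconvex}. The key algebraic observation is that the specific quadratic correction term in~\eqref{E:condition} is precisely what is needed to cancel the cross terms that arise once the dual variables are substituted in terms of the credences.

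Concretely, I would begin with the identity
\[
\norm{\bx^\dagger - \bx^o}^2 - \norm{\bx^\prime - \bx^o}^2 = -2(\bx^\dagger - \bx^o)^T(\bx^\prime - \bx^\dagger) - \norm{\bx^\prime - \bx^\dagger}^2
\]
and use the stationarity condition~\eqref{E:kkt_stationary} to rewrite the first term on the right as $\sum_{k\in\calK} \lambda_k^\dagger \nabla \ell_k\big(\phi^o(\bx^\dagger)\big)^T (\bx^\prime - \bx^\dagger)$. Since every $\lambda_k^\dagger \geq 0$, I then apply the (rearranged) inequality of Assumption~\ref{A:nonconvex} termwise to upper bound each inner product by $[\ell_k(\phi^o(\bx^\prime)) - \ell_k(\phi^o(\bx^\dagger))] + [\ell_k(\phi^o(\bx^\prime)) - \ell_k(\phi^o(\bx^\dagger))]^2/[2\ell_k(\phi^o(\bx^\dagger))]$.

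Next comes the main algebraic collapse. Complementary slackness~\eqref{E:kkt_comp_slack} gives $\ell_k(\phi^o(\bx^\dagger)) = -c_k$ whenever $\lambda_k^\dagger > 0$, while the counterfactual equation supplies $\lambda_k^\dagger w_k = -2c_k$, so $\lambda_k^\dagger = 2\ell_k(\phi^o(\bx^\dagger))/w_k$. Writing $a_k = \ell_k(\phi^o(\bx^\prime))$ and $b_k = \ell_k(\phi^o(\bx^\dagger))$, each summand simplifies via
\[
\frac{2b_k}{w_k}\left[(a_k - b_k) + \frac{(a_k - b_k)^2}{2b_k}\right] = \frac{a_k^2 - b_k^2}{w_k} = \frac{(c_k^\prime)^2 - c_k^2}{w_k}\text{,}
\]
using $c_k^\prime = -\ell_k(\phi^o(\bx^\prime))$ from Def.~\ref{D:local_cred}. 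Summing over $k$ produces exactly $\norm{\bc^\prime}_{\bW^{-1}}^2 - \norm{\bc}_{\bW^{-1}}^2$, and discarding the non-positive $-\norm{\bx^\prime - \bx^\dagger}^2$ yields~\eqref{E:compromise2}. Inactive constraints ($\lambda_k^\dagger = 0$) contribute zero to the left-hand side; the counterfactual equation forces $c_k = 0$ at those indices, and $(c_k^\prime)^2/w_k \geq 0$ keeps the right-hand side bound intact.

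The main obstacle is existence: the implication above only shows that \emph{any} fixed point of $T(\bc) = -\frac{1}{2}\bW\blambda^\dagger(\bc)$ in $\calC$ is a local credibility profile, but does not by itself produce one. I would address this by a Brouwer-type argument on a compact convex subset of $\calC$, using Assumption~\ref{A:diff} together with the strict feasibility built into~\eqref{E:slater} to argue continuity of $\blambda^\dagger$ as a function of $\bc$. The sign structure cooperates: non-negativity of the losses forces $c_k \leq 0$, matching dual feasibility $\lambda_k^\dagger \geq 0$, so $T$ maps the relevant set into itself. Verifying continuity of the dual multipliers through a local implicit-function argument on the KKT system~\eqref{E:kkt}, and handling the subtlety that constraints can switch between active and inactive under small perturbations of $\bc$, is the most delicate part of the program.
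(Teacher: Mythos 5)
Your proof of the implication in~\eqref{E:counterfactual} is correct and is, in substance, the paper's own argument: you substitute the stationarity condition~\eqref{E:kkt_stationary} into the expansion of $\norm{\bx^\dagger-\bx^o}^2-\norm{\bx^\prime-\bx^o}^2$, apply Assumption~\ref{A:nonconvex} termwise (valid since $\lambda_k^\dagger\geq 0$), and use complementary slackness together with $\lambda_k^\dagger = -2c_k/w_k$ to collapse each summand to $\left[\ell_k\big(\phi^o(\bx^\prime)\big)^2-\ell_k\big(\phi^o(\bx^\dagger)\big)^2\right]/w_k$; the paper organizes the same computation by adding the vanishing KKT terms to $r^\dagger(\bc)$ and invoking convexity of $\norm{\cdot}^2$ rather than your explicit three-point identity, but the algebra is identical. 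One small point: in Def.~\ref{D:local_cred} the pair $(\bx^\prime,\bc^\prime)$ is only required to be feasible, i.e.\ $\ell_k\big(\phi^o(\bx^\prime)\big)\leq -c_k^\prime$, so you should close with the inequality $\ell_k\big(\phi^o(\bx^\prime)\big)^2\leq (c_k^\prime)^2$ (which holds by non-negativity of the loss) rather than the equality $c_k^\prime=-\ell_k\big(\phi^o(\bx^\prime)\big)$; this does not affect the direction of the bound.

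The genuine gap is in the existence claim, where you have set yourself a harder problem than the theorem requires and sketched a route that is unlikely to go through as stated. The theorem asserts existence of \emph{some} local credibility profile $\bc^\dagger$ satisfying~\eqref{E:compromise2}, not existence of a fixed point of $T(\bc)=-\tfrac{1}{2}\bW\blambda^\dagger(\bc)$. The paper dispatches this with a one-line observation: any local minimizer $(\bar{\bx},\bar{\bc})$ of the joint problem $\min_{\bx,\bc}\,\norm{\bx-\bx^o}^2+\norm{\bc}^2_{\bW^{-1}}$ subject to $\ell_k\big(\phi^o(\bx)\big)\leq -c_k$ satisfies~\eqref{E:compromise2} by contradiction, since any nearby feasible pair beating it in~\eqref{E:compromise2} would also be feasible for the joint problem and have smaller objective. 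Your Brouwer-type program, by contrast, hinges on $\blambda^\dagger(\cdot)$ being a well-defined continuous map on a compact convex subset of $\calC$ that $T$ maps into itself. In the non-convex setting of Assumptions~\ref{A:diff}--\ref{A:nonconvex}, local minimizers and their multipliers are set-valued; Slater-type feasibility in~\eqref{E:slater} does not give uniqueness of $\blambda^\dagger$, and any single-valued selection is generically discontinuous across active-set changes unless you additionally impose strict complementarity and second-order sufficiency, none of which the theorem assumes. So as written the existence half is incomplete, and repairing it along your lines would require hypotheses the statement does not grant, whereas the joint-minimization argument needs nothing beyond what is already assumed.
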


\begin{proof}
See appendix~\ref{X:counterfactual}.
\end{proof}

Theorem~\ref{T:counterfactual} shows that Theorem~\ref{T:kkt} can be used to certify a local credibility measure~$\bc^\dagger$ without repeatedly solving~\eqref{P:primal}. Explicitly, if~\eqref{E:kkt} hold with~$\blambda^\dagger = -2 \bW^{-1} \bc$, then~$\big( \bx^\dagger(\bc), \bc \big)$ satisfy~\eqref{E:compromise2}. In other words, the dual variables associated to solutions of~\eqref{P:primal} provide counterfactuals of the form ``if the credence~$c_k$ assigned to class~$k$ had been~$\lambda_k^\dagger$ for all~$k \in \calK$, then~$\bc$ would have been a local credibility profile.'' Considering the sensitivity interpretation of~$\blambda^\dagger$ in the convex case, the compromise~\eqref{E:compromise2} also causes the credence~$c_k$ for classes in which~$\bx^o$ is harder to fit to decrease~(become more negative) in order to manage risk.

In the sequel, we leverage the results from Theorems~\ref{T:kkt} and~\ref{T:counterfactual} to first show that credibility as defined in Def.~\ref{D:local_cred} has a Bayesian interpretation as the MAP of a specific failure model~(Section~\ref{S:bayesian}) and then put forward an algorithm to efficiently compute the credibility profile~$\bc^\dagger$ without repeatedly solving~\eqref{P:primal} for different credence profiles.

\section{A Bayesian formulation of credibility}
\label{S:bayesian}

An interesting consequence of Theorem~\ref{T:counterfactual}, is that the fit-risk trade-off credibility definitions in Def.~\ref{D:cred} and~\ref{D:local_cred} can also be treated as a Bayesian inference problem. This problem formalizes the intuition that formulating~\eqref{E:compromise} [and more generally~\eqref{E:compromise2}] in terms of Euclidian norms is equivalent to model the uncertainties related to the input and credibility profiles as Gaussian random variables~(RVs).

Consider the likelihood
\begin{subequations}\label{E:bayesianCred}
\begin{align}\label{E:likelihoodCred}
	\Pr\left( \bx \mid \bc \right) &=
		\calN\left( \bx \mid \bx^o, (2t)^{-1} \bI \right)
	\notag\\
	&{}\times \prod_{c_k \neq 0}
			\text{SE}\left[ \ell_k\big( \phi^o(\bx) \big) \mid c_k, \frac{2 c_k}{w_k} t \right]
		\text{,}
\end{align}
defined for a parameter~$t > 0$, where~$\calN\left( \bz \mid \bzb, \bSigma \right)$ represents the density of a normal random vector~$\bz$ with mean~$\bzb$ and covariance~$\bSigma$ and~$\text{SE}\left( z \mid \bar{z}, \eta \right)$, the density of an exponentially distributed RV~$z$ with rate~$\eta$ shifted to~$\bar{z}$. This likelihood represents our belief that, though~$\bx^o$ is representative of the model input, it may be corrupted. This uncertainty is described by the first term of~\eqref{E:likelihoodCred}.

The other terms account for the failure of~$\bx$ to meet the credence~$\bc$ in the constraints of~\eqref{P:primal}. If an input~$\bx$ violates any of the credences, i.e., $\ell_k\big( \phi^o(\bx) \big) > -c_k$, then its likelihood is penalized independently of how close it is to the mean~$\bx^o$. This penalty follows a constant failure rate that depends on the credence~$c_k$ itself. Hence, the probability of the violation increasing by~$\epsilon$ does not depend on how much the constraint has already been violated. Explicitly,
\begin{align*}
	\Pr\left[ \ell_k\big( \phi^o(\bx) \big) > -c_k + z + \epsilon \mid
		\ell_k\big( \phi^o(\bx) \big) > -c_k + z \right]
	\\
	{}= \Pr\left[ \ell_k\big( \phi^o(\bx) \big) > -c_k + \epsilon \right]
		\text{.}
\end{align*}

To obtain the joint distribution of~$(\bx,\bc)$, we use a normal prior on the credence values, namely
\begin{equation}\label{E:priorCred}
	\Pr\left( \bc \right) = \calN\left( \bc \mid \zeros, (2t)^{-1} \bW \right)
		\text{.}
\end{equation}
\end{subequations}
Observe from~\eqref{E:priorCred} that~$\bW$ in~\eqref{E:compromise}~[and~\eqref{E:compromise2}] can be interpreted both as a weighting matrix that modifies the geometry of the credence space or as a measure of uncertainty over the prediction credibilities. It can therefore be used to incorporate prior information, e.g., on the relative frequency of classes. The following proposition characterizes a local maximum of the joint distribution~$\Pr(\bx,\bc)$:

\begin{proposition}\label{T:map}

Let the pair~$(\bx^\dagger,\bc^\dagger)$ satisfy the local credibility compromise~\eqref{E:compromise2}. Then, there exists~$T > 0$ such that it is a local maximum of the joint distribution~$\Pr\left( \bx, \bc \right) = \Pr\left( \bx \mid \bc \right) \Pr\left( \bc \right)$ defined by~\eqref{E:bayesianCred} for~$t \geq T$.

\end{proposition}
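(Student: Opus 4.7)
My plan is to show that $(\bx^\dagger,\bc^\dagger)$ is a critical point of $\log\Pr(\bx,\bc)$ and that the Hessian there is negative definite once $t$ exceeds some threshold. Taking logarithms in~\eqref{E:bayesianCred} expresses $-\log \Pr(\bx,\bc)$ as the sum of two quadratic forms, $t\|\bx-\bx^o\|^2$ and $t\|\bc\|_{\bW^{-1}}^2$, plus the contributions of the shifted-exponential factors. Each such factor is supported where $\ell_k(\phi^o(\bx))$ lies on the appropriate side of $-c_k$ and contributes an affine term in $\ell_k(\phi^o(\bx))+c_k$ with slope proportional to $tc_k/w_k$. As a result, $-\log\Pr$ takes the shape of a $t$-scaled augmented Lagrangian for~\eqref{P:primal} whose multipliers are tied to $\bc$, plus a Gaussian regularizer on $\bc$.

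Next, compute $\nabla_\bx(-\log\Pr)$: it equals $2t(\bx-\bx^o)$ plus a sum of terms along $\nabla_\bx\ell_k(\phi^o(\bx))$ with coefficients depending linearly on $c_k$. Substituting $c_k^\dagger=-w_k\lambda_k^\dagger/2$ from Theorem~\ref{T:counterfactual} recovers, up to the factor $t$, the KKT stationarity condition~\eqref{E:kkt_stationary}. The analogous $c_k$-derivative reproduces the complementary-slackness identity~\eqref{E:kkt_comp_slack}. Invoking Theorem~\ref{T:kkt} then certifies that $(\bx^\dagger,\bc^\dagger)$ is a critical point of $-\log\Pr$ for every $t>0$.

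For the second-order condition, I would decompose the Hessian at $(\bx^\dagger,\bc^\dagger)$ as
\begin{equation*}
\nabla^2(-\log\Pr)(\bx^\dagger,\bc^\dagger) = 2t\cdot\blkdiag(\bI,\bW^{-1}) + \bH,
\end{equation*}
where $\bH$ aggregates the second derivatives of the SE contributions. Inside $\bH$, the $\bx$-block involves $\nabla_\bx^2[\ell_k(\phi^o(\bx^\dagger))]$ and the $\bx$--$\bc$ cross blocks contain first derivatives of $\ell_k\circ\phi^o$, all multiplied by $t$. Here Assumption~\ref{A:nonconvex} plays the decisive role: its local quadratic-from-below inequality for each $\ell_k\circ\phi^o$, Taylor-expanded about $\bx^\dagger$, yields a bound $\nabla_\bx^2[\ell_k(\phi^o(\bx^\dagger))]\succeq -\gamma_k\bI$ where $\gamma_k$ depends only on model data at $\bx^\dagger$. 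This gives $\bH\succeq -tM$ for a fixed symmetric matrix $M$ determined by $\gamma_k$, $c_k^\dagger$, and $\bW$. Choosing $T$ so that $2\,\blkdiag(\bI,\bW^{-1}) - M\succ 0$ then makes the full Hessian strictly positive definite for every $t\geq T$, which proves the local maximum.

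The main obstacle is controlling the $\bx$--$\bc$ cross-derivative blocks of $\bH$: because they carry the same factor of $t$ as the dominant Gaussian diagonal, one cannot simply dominate them by sending $t\to\infty$, so a block Schur-complement argument (or a careful change of variables) is needed to exhibit the Gaussian diagonal as strictly dominant. Extracting a bona fide pointwise Hessian lower bound from the integral inequality~\eqref{E:condition} is also delicate: the right-hand side of~\eqref{E:condition} involves $\ell_k(\phi^o(\bx))$ itself, so its two-variable expansion around $\bx^\dagger$ must be unfolded with care in order to identify the resulting coefficient $\gamma_k$ with an expression in the data available at $\bx^\dagger$.
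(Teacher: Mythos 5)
Your plan diverges from the paper's proof, and at its core it has gaps that I do not think can be repaired as stated. First, the claimed criticality "for every $t>0$" is not correct: the normalizing factors of the shifted-exponential terms contribute $\sum_{c_k\neq 0}\log\left(\tfrac{2c_k}{w_k}t\right)$ to $\log\Pr$, whose $c_k$-derivative is $1/c_k$ and is \emph{not} scaled by $t$; at $(\bx^\dagger,\bc^\dagger)$ the $t$-scaled KKT terms vanish but this $O(1)$ term does not, so the point is only asymptotically stationary. This is precisely why the statement carries a threshold $T$ and why the paper argues asymptotically rather than through exact first-order conditions. Second, and more fundamentally, a Hessian-based second-order sufficiency argument is unavailable here: at the compromise the constraints of~\eqref{P:primal} are active, so $(\bx^\dagger,\bc^\dagger)$ sits exactly on the boundary of the support of the $\text{SE}$ factors in~\eqref{E:likelihoodCred}, where the joint density is discontinuous (it is zero on one side, so $\log\Pr=-\infty$ there). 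The density is therefore not twice differentiable -- not even continuous -- in any neighborhood of the point, and "negative-definite Hessian at a critical point" does not certify a local maximum. Third, the domination step cannot be fixed by taking $t$ large, as you yourself observe: both the Gaussian diagonal $2t\,\blkdiag(\bI,\bW^{-1})$ and the SE contributions carry the same factor $t$, so your condition $2\,\blkdiag(\bI,\bW^{-1})-M\succ 0$ is $t$-independent and there is no reason it should hold for non-convex $\ell_k\circ\phi^o$; moreover, Assumption~\ref{A:nonconvex} does not yield a bound of the form $\nabla^2_{\bx}\,\ell_k(\phi^o(\bx^\dagger))\succeq-\gamma_k\bI$ -- expanding~\eqref{E:condition} to second order gives at best a curvature lower bound of the rank-one type $-\nabla\ell_k\nabla\ell_k^T/\ell_k\big(\phi^o(\bx^\dagger)\big)$, which does not deliver the matrix inequality your Schur-complement step would need.

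The paper's route avoids all of this and is much shorter: it observes that~\eqref{E:compromise2} says exactly that $(\bx^\dagger,\bc^\dagger)$ is a local minimizer of the joint problem~\eqref{P:both} with objective $\norm{\bx-\bx^o}^2+\norm{\bc}^2_{\bW^{-1}}$; the KKT conditions of that problem (stationarity in $\bc$ plus complementary slackness) force the SE penalty term $\sum_k \tfrac{2c_k}{w_k}\big[\ell_k\big(\phi^o(\bx^\dagger)\big)+c_k\big]$ to vanish at the point, so $-\tfrac{1}{t}\log\Pr\big(\bx^\dagger,\bc^\dagger\big)$ equals the~\eqref{P:both} objective up to an $O(\log t/t)$ remainder. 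A direct comparison with any nearby feasible $(\bx^\prime,\bc^\prime)$ then shows that a strictly larger density for all $t$ would contradict local optimality in~\eqref{P:both} once $t$ is large. If you want to keep your perspective, the salvageable part is the identification of $-\log\Pr$ with a scaled Lagrangian-type function; but the local-maximality must then be established by a zeroth-order comparison of values (one-sided, respecting the support of the SE factors), not by a Hessian test at a point where the density is nonsmooth.
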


\begin{proof}
See appendix~\ref{X:map}.
\end{proof}

Proposition~\ref{T:map} shows that the local credibility profiles in~\eqref{E:compromise2} are asymptotic local maxima of the probabilistic model~\eqref{E:bayesianCred} as the variance of its components decreases, i.e., as~$t$ increases and the joint probability distribution becomes more concentrated. In fact, every critical point of~$\Pr\left( \bx, \bc \right)$ satisfies the KKT conditions of Theorem~\ref{T:kkt} and the counterfactual condition of Theorem~\ref{T:counterfactual}. Recall that if~$\phi^o$ is convex, then the unique mode of this joint distribution~$(\bx^\star,\bc^\star)$ provides a credibility profile according to Def.~\eqref{D:cred}. Hence, though motivated as deterministic fit-risk trade-offs, the credibility metrics proposed in this work can also be viewed as MAP estimates.

In the next section, we conclude our analysis of the credibility profile in Def.~\ref{D:local_cred} by showing how it can be computed efficiently from~\eqref{P:primal} at no additional complexity cost. We do so by modifying the Arrow-Hurwicz algorithm~\citep{Arrow58s} using Theorem~\ref{T:counterfactual}.

\section{A modified Arrow-Hurwicz algorithms}
\label{S:algorithm}

\begin{algorithm}[t]
\centering
\caption{Counterfactual optimization algorithm}
	\label{L:saddle}
	\setlength{\baselineskip}{1.2\baselineskip}
	\footnotesize
	\begin{algorithmic}
		\STATE Let $\bx^{(0)} = \bx$ and~$\blambda^{(0)} = \ones$.

		\FOR{$t = 1,2,\dots$}

			\STATE $\displaystyle
				g_{\bx}^{(t)} = 2 \big( \bx^{(t)} - \bx^o \big)
					+ \sum_{k \in \calK} \lambda_k^{(t)}
						\nabla_{\bx} \ell_k\left( \phi^o\big( \bx^{(t)} \big) \right)$

			\STATE $\displaystyle
				\bx^{(t+1)} = \bx^{(t)} - \eta_{\bx} g_{\bx}^{(t)}$

			\STATE $\displaystyle
				\lambda_k^{(t+1)} = \left[ \lambda_k^{(t)}
				+ \eta_{\blambda} \left[ \ell_k\left( \phi^o\big( \bx^{(t)} \big) \right)
					- \lambda_k^{(t)} \right]
				\vphantom{\sum_1^0}\right]_+$

		\ENDFOR
	\end{algorithmic}
\end{algorithm}

Theorems~\ref{T:kkt} and~\ref{T:counterfactual} suggest a way to exploit the information in the dual variables to solve~\eqref{P:primal} directly for the~(local) credibility~$\bc^\dagger$ without testing multiple credence profiles. Indeed, start by considering that the credences~$\bc$ are fixed and define the Lagrangian associated with~\eqref{P:primal} as
\begin{equation}\label{E:lagrangian}
	\calL(\bx, \blambda, \bc) = \norm{\bx - \bx^o}^2 + \sum_{k \in \calK} \lambda_k
		\left[ \ell_k\big( \phi^o(\bx) \big) + c_k \right]
		\text{.}
\end{equation}
Observe that the KKT necessary condition~\eqref{E:kkt} for~$\bx^\dagger$ to be a local minimizer can be written in terms of~\eqref{E:lagrangian} as~$\nabla_{\bx} \calL(\bx^\dagger, \blambda^\dagger, \bc) = 0$ and~$\lambda_k^\dagger \left[ \nabla_{\blambda} \calL(\bx^\dagger, \blambda^\dagger, \bc) \right]_k = 0$ for~$k \in \calK$, where~$[\bz]_k$ indicates the~$k$-th element of the vector~$\bz$. The classical Arrow-Hurwicz algorithm~\citep{Arrow58s} is a procedure inspired by these relations that seeks a KKT point by alternating between the updates
\begin{subequations}\label{E:arrow}
\begin{align}
	\bx^+ &= \bx
		- \eta_{\bx} \nabla_{\bx} \calL\left( \bx, \blambda, \bc \right)
	\label{E:arrow_primal}\\\notag
	{}&= \bx - \eta_{\bx} \Bigg[ 2(\bx - \bx^o) + \sum_{k \in \calK} \lambda_k
		\nabla_{\bx} \ell_k\big( \phi^o(\bx) \big) \Bigg]
		\text{,}
	\\
	\lambda_k^+ &= \Bigg[ \lambda_k + \eta_{\blambda}
		\big[
			\nabla_{\blambda} \calL\left( \bx, \blambda, \bc \right)
		\big]_k
	\Bigg]_+
	\label{E:arrow_dual}\\\notag
	{}&= \Bigg[ \lambda_k
		+ \eta_{\blambda} \bigg( \ell_k\big( \phi^o(\bx) \big) + c_k \bigg)
	\Bigg]_+
		\text{,}
\end{align}
\end{subequations}
where~$\eta_{\bx},\eta_{\blambda} > 0$ are step sizes and~$[\bz]_+ = \max(\bz,\zeros)$ denotes the projection onto the non-negative orthant of~$\setR^\abs{\calK}$.

To understand the intuition behind this algorithm, note that~\eqref{E:arrow_primal} updates~$\bx$ by descending along a weighted combination of gradients of the objective and the constraints so as to reduce the value of all functions. The weight of each constraint is given by its respective dual variable~$\lambda_k$. If the $k$-th constraint is satisfied, then~$\ell_k\big( \phi^o(\bx) \big) + c_k \leq 0$ and its influence on the update of~$\bx$ is decreased by~\eqref{E:arrow_dual} until it vanishes. On the other hand, if the constraint is violated, then~$\ell_k\big( \phi^o(\bx) \big) + c_k > 0$ and the value of~$\lambda_k$ increases. The relative strength of each gradient in the update~\eqref{E:arrow_primal} is therefore related to the history of violation of each constraint.

The main drawback of~\eqref{E:arrow} is that it seeks a KKT point of~\eqref{P:primal} for a given, fixed credence profile~$\bc$ while the credibility profile~$\bc^\dagger$ from~\eqref{E:compromise2} is not known \emph{a priori}. To overcome this issue, we can use the counterfactual result~\eqref{E:counterfactual} in~\eqref{E:arrow_dual} to obtain
\begin{equation}\label{E:arrow_dual_mod}
	\lambda_k^+ = \Bigg[
		\lambda_k + \eta_{\blambda} \bigg( \ell_k\big( \phi^o(\bx) \big)
					- \frac{1}{2} w_k \lambda_k \bigg) \vphantom{\sum}
	\Bigg]_+
		\text{,}
\end{equation}
The complete counterfactual optimization procedure is collected in Algorithm~\ref{L:saddle}. If the dynamics of Algorithm~\ref{L:saddle} converge to~$(\bx_\infty,\blambda_\infty)$, then~$\bx_\infty$ is a local minimizer of~\eqref{P:primal} for credences~$\bc = -\frac{1}{2} \bW \blambda_\infty$, which is a local credibility profile~$\bc^\dagger$ according to Theorem~\ref{T:counterfactual} .

It is worth noting that in the general, non-convex case, Algorithm~\ref{L:saddle} need not converge. This will happen if the gradient descent procedure in~\eqref{E:arrow_primal} is unable to find a~$\bx$ that fits the credences imposed by~$\blambda$. For rich model classes, this is less likely to happen and there is considerable empirical evidence from the adversarial examples literature that gradient methods such as~\eqref{E:arrow_primal} do converge~\citep{Szegedy14i, Goodfellow14e, Madry18t}. This is also what we observed in our numerical experiments, during which we found no instance in which Algorithm~\ref{L:saddle} diverged~(Section~\ref{S:sims}). When~$\ell_k$ is convex and non-decreasing and~$\phi^o$ is convex, then Algorithm~\ref{L:saddle} can be shown to converge to the global optima~$(\bx^\star,\blambda^\star,\bc^\star)$ of~\eqref{P:primal} through classical arguments, as in~\citep{Cherukuri16a, Nagurney12p}. Details of this result are beyond the scope of this paper.

\section{Numerical experiments}
\label{S:sims}

\begin{figure}[tb]
\begin{minipage}[c]{0.49\columnwidth}
\centering
\includegraphics[width=\columnwidth]{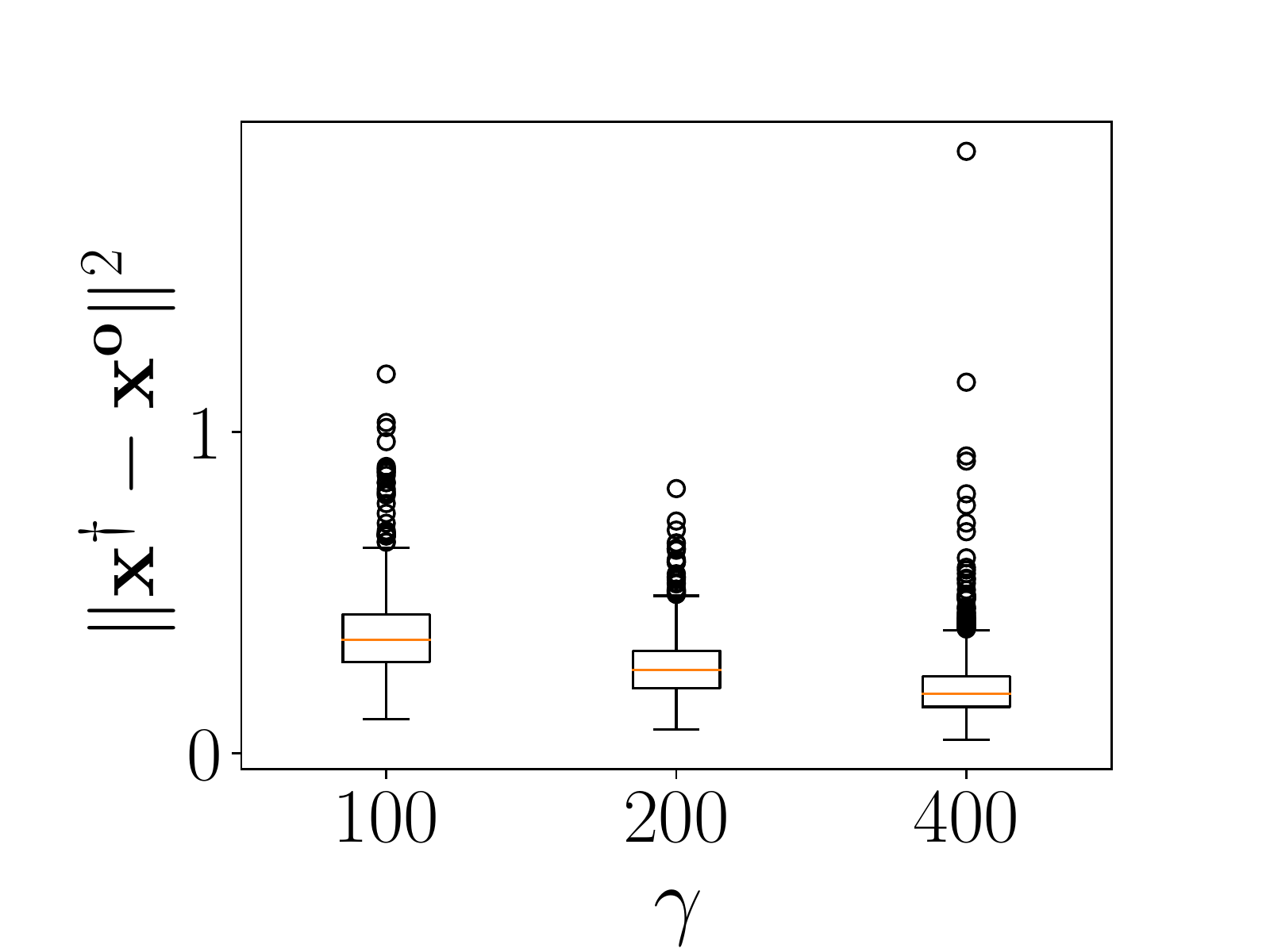}

{\small (a)}
\end{minipage}
\hfill
\begin{minipage}[c]{0.49\columnwidth}
\centering
\includegraphics[width=\columnwidth]{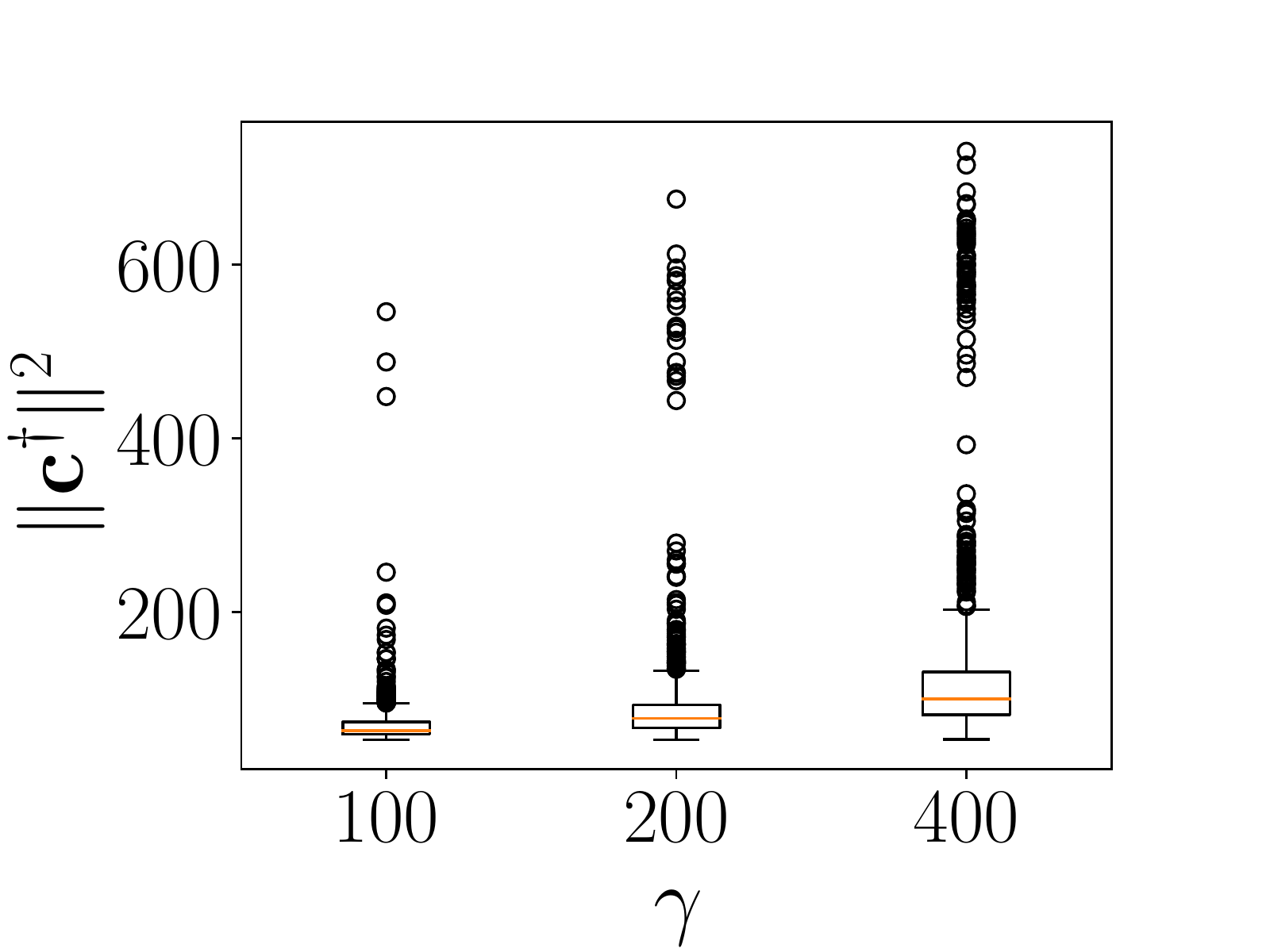}

{\small (b)}
\end{minipage}

\caption{(a)~Perturbation magnitude and (b)~Overall credibility}
	\label{F:compromise}
\end{figure}

\begin{figure}[tb]
\begin{minipage}[c]{0.49\columnwidth}
\centering
\includegraphics[width=\columnwidth]{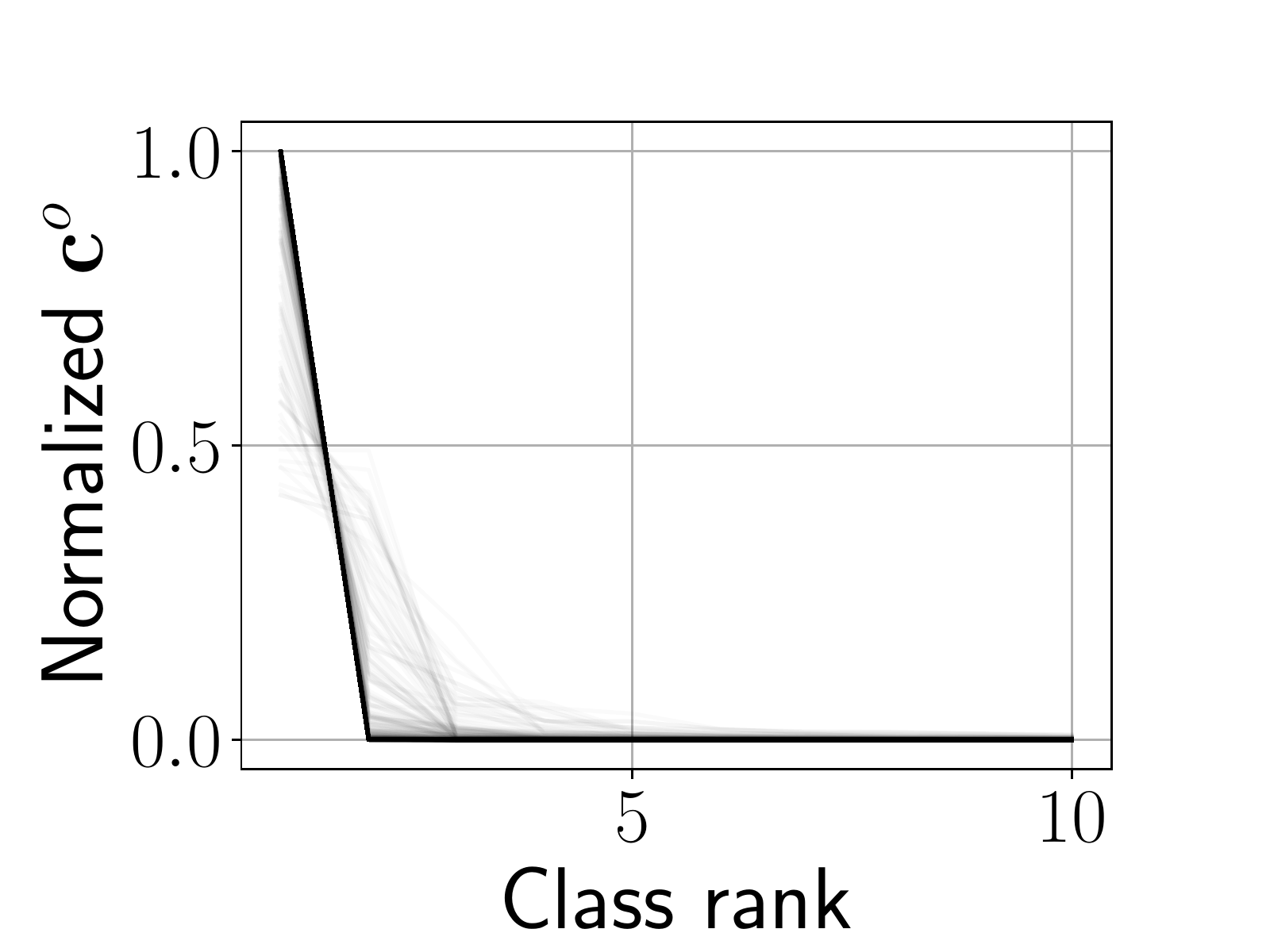}

{\small (a) ResNet18}
\end{minipage}
\hfill
\begin{minipage}[c]{0.49\columnwidth}
\centering
\includegraphics[width=\columnwidth]{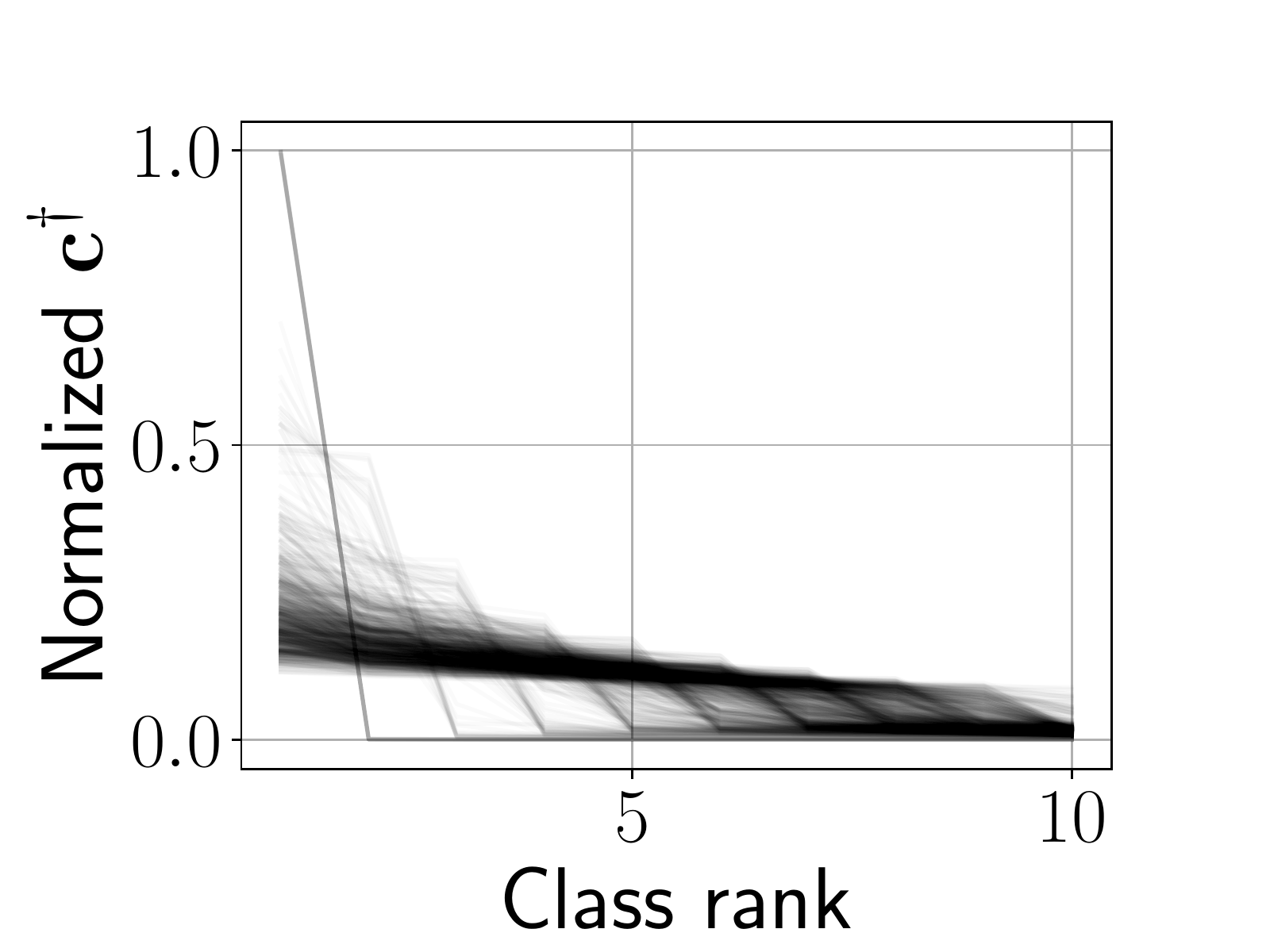}

{\small (b) ResNet18}
\end{minipage}

\begin{minipage}[c]{0.49\columnwidth}
\centering
\includegraphics[width=\columnwidth]{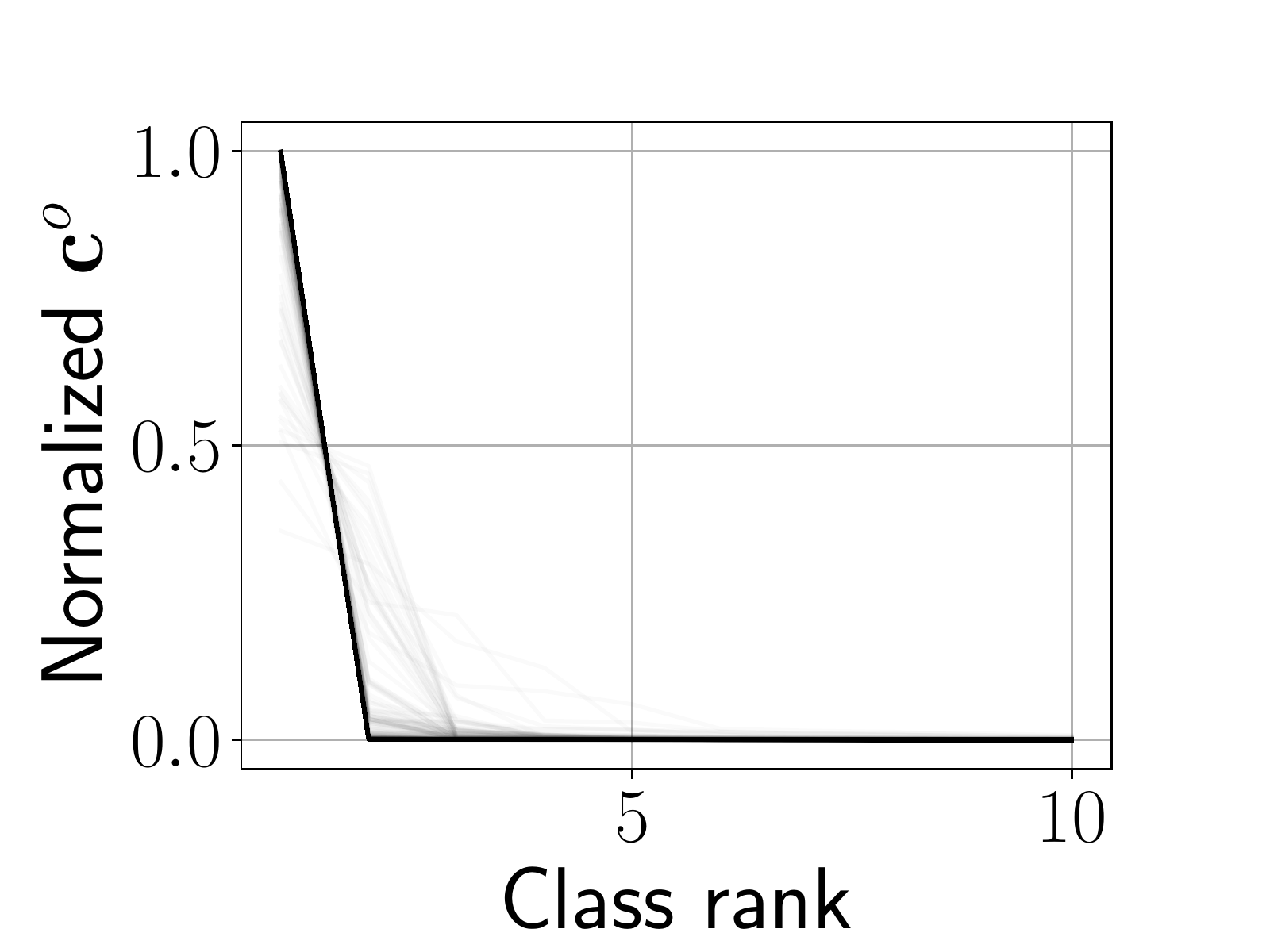}

{\small (c) DenseNet169}
\end{minipage}
\hfill
\begin{minipage}[c]{0.49\columnwidth}
\centering
\includegraphics[width=\columnwidth]{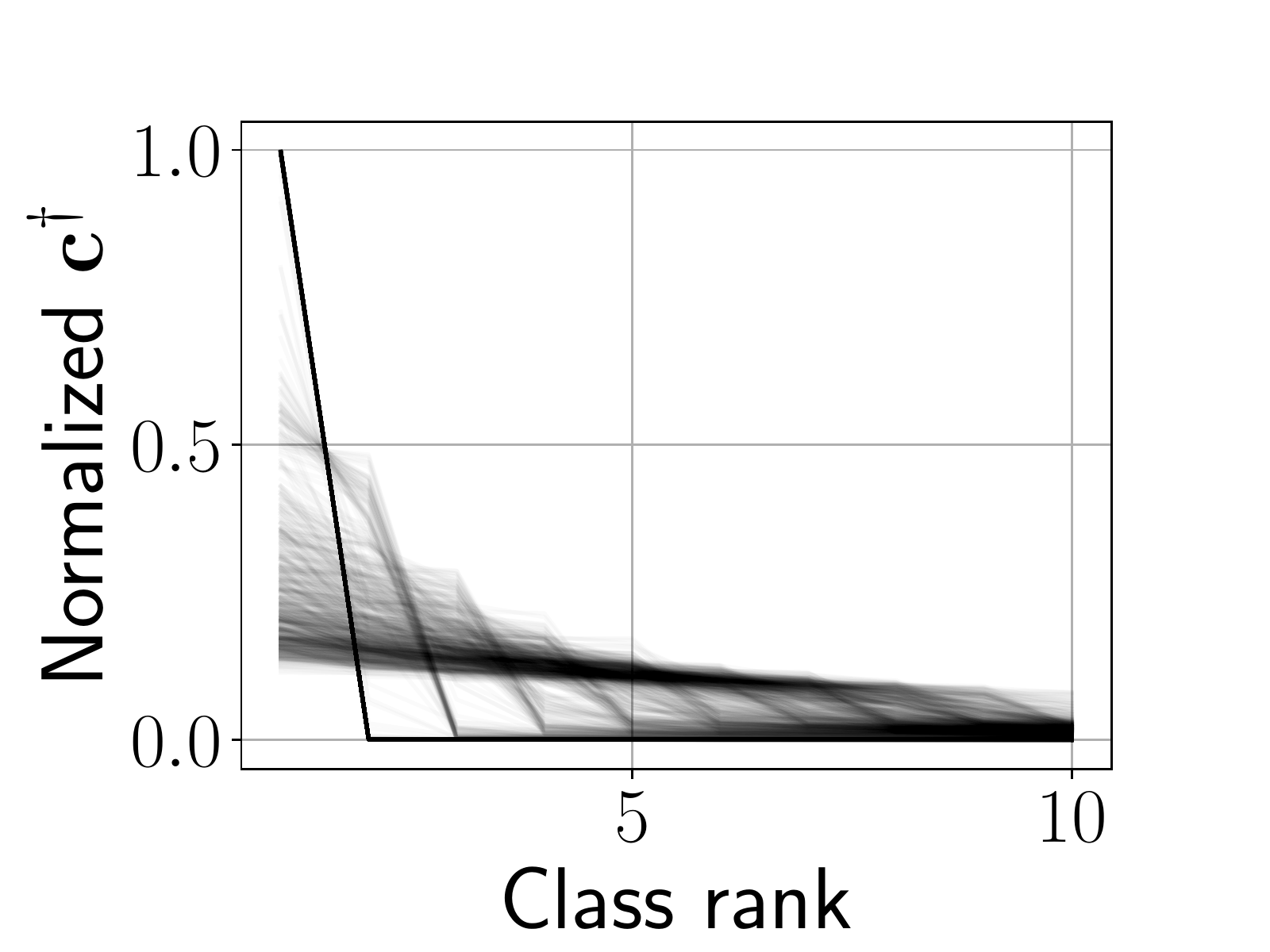}

{\small (d) DenseNet169}
\end{minipage}

\caption{Credibility profiles for ResNet18 and DenseNet169}
	\label{F:res_vs_dense}
\end{figure}

To showcase the properties and uses of this trade-off approach to credibility, we use two CNN architectures, a ResNet18~\citep{He16d} and a DenseNet169~\citep{Huang17d}, trained to classify images from the CIFAR-10 dataset. The models were trained over $600$~epochs in mini-batches of $250$~samples using Adam with the default parameters from~\citep{Kingma17a} with a weight decay parameter of~$10^{-3}$. Random cropping and flipping was used to augment data during training. The final classifiers achieve validation accuracies of~$93\%$ for the ResNet18 and~$94\%$ for the DenseNet169. Throughout this section, the loss function~$\ell_k$ is the cross-entropy loss and all experiments were performed over a~$1000$ images random sample from the CIFAR-10 test set.

We start by illustrating the effect of the trade-off between fit and risk~(perturbation magnitude). To do so, we compute the credibility~$\bc^\dagger$ for~$\bW = \gamma \bI$ with~$\gamma \in \{100, 200, 400\}$ using the ResNet18 CNN as~$\phi^o$~(Figure~\ref{F:compromise}). Observe that as~$\gamma$ increases, the compromise~\eqref{E:compromise} becomes more risk averse. Hence, the magnitude of the perturbations decrease~(Figure~\ref{F:compromise}a). As this happens, the norm of the credibility profiles~$\norm{\bc^\dagger}^2$ increases~(Figure~\ref{F:compromise}b). Yet, note that the trade-off in~\eqref{E:preCompromise} continues to hold for~$\gamma^{-1} \norm{\bc^\dagger}^2$. In the sequel, we proceed using~$\gamma = 200$.

Given the perturbation stability interpretation of~$\bc^\dagger$, it can be used to analyze the robustness of different classifiers. By studying the relation between the ordering induced on the classes~$\calK$ and the values of~$\bc^o$ and~$\bc^\dagger$~(Fig.~\ref{F:res_vs_dense}), we can evaluate how input perturbations alter the model outputs. Recall that the perturbation magnitude is adapted to both the model and the input. Hence, while perturbation levels might be different, they are the limit of the compromise in~\eqref{E:compromise2}, so that any larger perturbation would not incur in a significantly larger change in the model output. Rather than assessing robustness to a set of perturbations, this analysis evaluates the failure modes of the model, i.e., when and how models fail.

\begin{figure}[tb]
\centering
\includegraphics[width=\columnwidth]{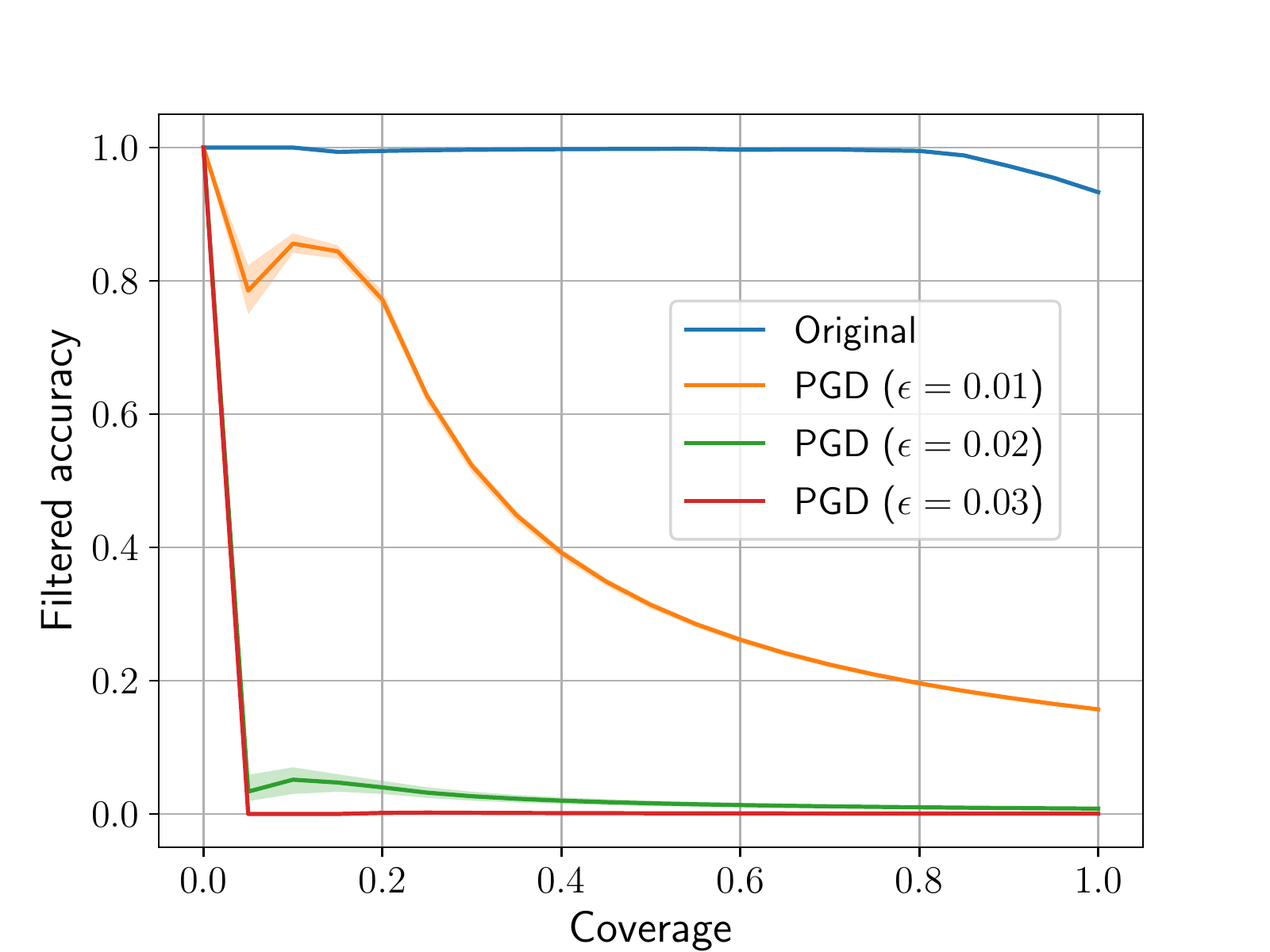}
\caption{Performance of softmax-based filtered classifiers. Shades display best and worst case results over~$10$ restarts.}
\label{F:filtering_softmax}
\end{figure}

In Figure~\ref{F:res_vs_dense}, we display the values of~$\bc^o$ and~$\bc^\dagger$ sorted in decreasing order for the pretrained ResNet18 and DenseNet169 CNNs. To help visualization, the credibility profiles were normalized so as to lie in the unit interval. Notice that, in contrast to the DenseNet, the perturbed profiles of the Resnet~(Figure~\ref{F:res_vs_dense}b) are considerably different than the original ones~(Figure~\ref{F:res_vs_dense}a). This implies that the ResNet output can be considerably modified by perturbations that are small compared to their effect~(as in Def.~\ref{D:local_cred}). On the other hand, note that the DenseNet retains a good amount of certainty on some samples even after perturbation~(Figure~\ref{F:res_vs_dense}d). While in~$\bc^o$ this certainty can be artificial~(e.g., due to miscalibration), the fact that~$\bc^\dagger \approx \bc^o$ means that modifying the classifier decision~(whether correct or not) would require too large perturbations to be warranted. In this case, the DenseNet model displays signs of stability that the ResNet does not. It is worth pointing out that both architecture have a similar numbers of parameters. We proceed with the experiments using the ResNet.

Another use of credibility is in the context of filtering. In this case, credibility is used to assess whether the classifier is confident enough to make a prediction. If not, then it should abstain from doing so. The performance of such filtered classifiers is evaluated in terms of its coverage~(how many samples the model chooses to classify) and filtered accuracy~(accuracy over the classified samples). Ideally, we would like a model that never abstains~($100\%$ coverage) and is always accurate. Yet, these two quantities are typically conflicting: filtered accuracy can often be improved by reducing coverage. The issue is then whether this can be done smoothly. Suppose that the model decides to classify a sample if its second largest credibility is less than the largest one by a factor of at least~$1-\alpha$, where~$\alpha$ is chosen to achieve specific coverage levels. Figures~\ref{F:filtering_softmax} and~\ref{F:filtering_cred} compare the results of directly using the softmax output of the model and the credibility profile~$\bc^\dagger$.

\begin{figure}[tb]
\centering
\includegraphics[width=\columnwidth]{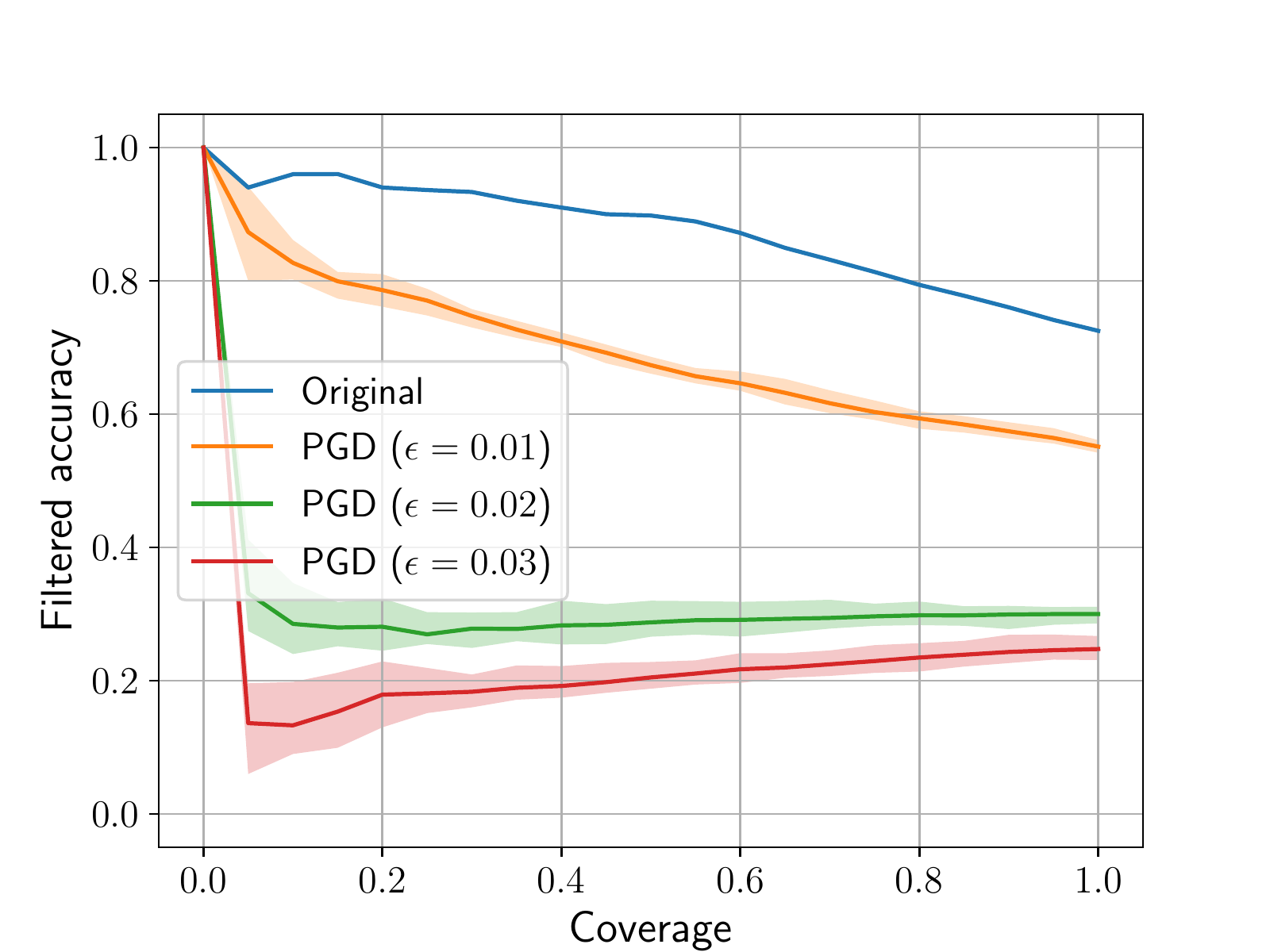}
\caption{Performance of credibility-based filtered classifiers. Shades display best and worst case results over~$10$ restarts.}
\label{F:filtering_cred}
\end{figure}

\begin{figure}[tb]
\centering
\includegraphics[width=\columnwidth]{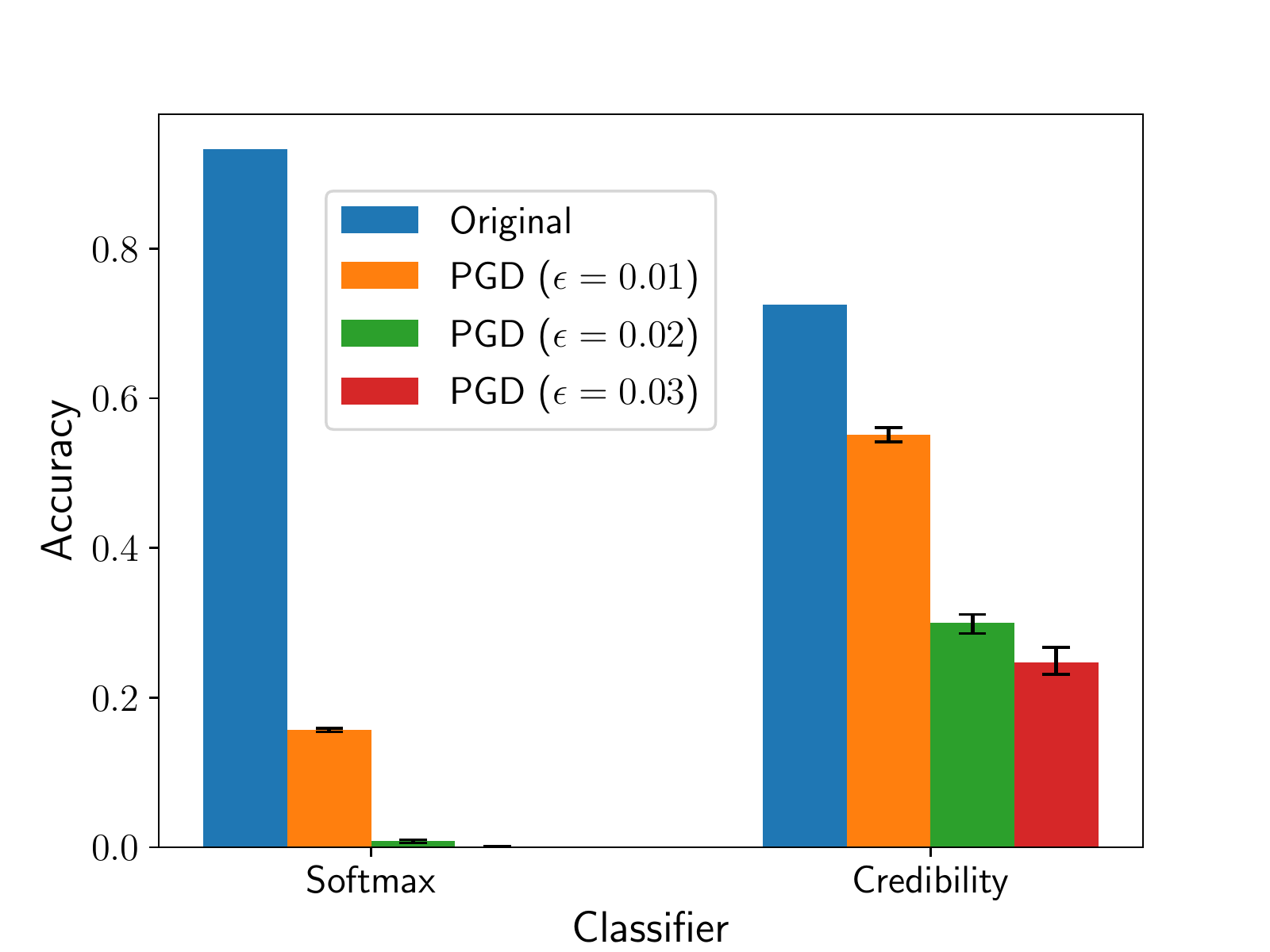}
\caption{Accuracy of softmax- and credibility-based classifiers}
\label{F:accuracy}
\end{figure}

Notice that when classifying every sample, the classifier based on~$\bc^\dagger$ has lower accuracy~(see also Figure~\ref{F:accuracy}). However, it is able to discriminate between certain and uncertain predictions in a way that the output of the model cannot. This is straightforward given that the model output typically has a single strong component~(see Figure~\ref{F:res_vs_dense}a). This shows that the pretrained model is, in many cases, overconfident about its predictions, a phenomenon related to the issue of miscalibration in NNs~\citep{Guo17o}. This overconfidence becomes clear when the data is attacked using adversarial noise. To illustrate this effect, we perturb the test images using the projected gradient descent~(PGD) attack for maximum perturbations~$\epsilon \in \{0.01,0.02,0.03\}$~\citep{Madry18t}. It is worth noting that for the largest~$\epsilon$, the perturbations are noticeable in the image. PGD was run for $100$~iterations, with step size~$0.01$~($0.005$ for~$\epsilon = 0.01$), and the figures show the range of results obtained over~$10$ restarts.

The accuracy of the classifiers decreases considerably after the PGD attack~(Figure~\ref{F:accuracy}). However, while the performance of the softmax classifier degrades by almost~$78\%$, the credibility-based ones drops by~$17\%$. When the magnitude of the perturbation reaches~$\epsilon = 0.03$, the output of the model was only able to correctly classify a single image in half of the restarts. Using credibility, however, accuracy remained above~$23\%$. It is worth noting that this result is achieved without any modification to the model, including retraining or preprocessing. What is more, filtering does not improve upon these results due to the miscalibration issue. Indeed, even for~$\epsilon = 0.01$, the softmax-based filtered classifier must give up on over~$70\%$ of its coverage to recover the performance that the credibility-based filtered classifier has classifying every sample. These experiments illustrate the trade-off between robustness and nominal accuracy noted in~\citep{Tsipras19r}.

\section{Conclusion}

This work introduced a credibility measure based on a compromise between how much the model fit can be improved by perturbing its input and how much this perturbation modifies the original input~(risk). By formulating this problem in the language of constrained optimization, it showed that this trade-off can be determined counterfactually, without testing multiple perturbations. Leveraging these results, it put forward a practical method to assign credibilities that (i)~can be computed for any~(possibly non-convex) differentiable model, from RKHS-based solutions to any (C)NN architecture, (ii)~can be obtained for models that have already been trained, (iii)~does not rely on any form of training data, and (iv)~has formal guarantees. Future works include devising local sensitivity results for non-convex optimization programs, analyze the model perturbation problem, and apply the counterfactual compromise result to different learning problems, such as outlier detection. The Bayesian formulation from Section~\ref{S:bayesian} also suggests an independent line of work linking constrained optimization and MAP estimates.

\bibliographystyle{icml2020}
\bibliography{aux_files/IEEEabrv,aux_files/af,aux_files/bayes,aux_files/control,aux_files/gsp,aux_files/math,aux_files/ml,aux_files/rkhs,aux_files/rl,aux_files/sp,aux_files/stat}

\onecolumn

\section{Additional numerical experiments}

In these additional results, we repeat the experiments from the main paper on a different dataset~(namely, the fashion MNIST~\citep{Xiao17f}) to show how they carry over to another application. Here, we perform the experiments using the ResNet18~\citep{He16d} architecture trained over $8$~epochs in mini-batches of $256$~samples using Adam with the default parameters from~\citep{Kingma17a} and weight decay of~$10^{-3}$. Without data augmentation, the final classifier achieves an accuracy of~$92\%$ over the test set. Once again, the loss function~$\ell$ used is the cross-entropy loss and all experiments were performed over a~$1000$ images random sample from the Fashion MNIST test set. In the sequel, we take~$\bW = \gamma \bI$ with~$\gamma = 200$.

We once again begin leveraging the perturbation stability interpretation of~$\bc^\dagger$ to analyze the robustness of the ResNet on this new dataset by looking at the normalized values of~$\bc^o$ and~$\bc^\dagger$ sorted in decreasing order~(Figure~\ref{F:cred}). Notice that, in contrast to the ResNet trained on CIFAR10, the perturbed profiles here much noisier. This classifier is therefore less robust to perturbations of the input: its output can be considerably modified by comparably small perturbations. Notice that this analysis does not apply to the ResNet architecture in general, but to the specific instance used to classify these images. While these differences can be due to a larger sensitivity of the data (Fashion MNIST pictures are black-and-white whereas CIFAR10 has colored images), it can also be due to the fact that we trained for fewer epochs and did not use data augmentation. The power of using credibility profiles to analyze robustness is exactly due to the fact that it holds for the specific instance and application, in contrast to an average analysis.

Results for the filtering classifiers are shown in Figures~\ref{F:filtering_softmax_app} and~\ref{F:filtering_cred_app}. Once again, the model classifies samples only if the second largest credibility~(or softmax output entry) is less than the largest one by a factor of at least~$1-\alpha$, where~$\alpha$ is chosen to achieve specific coverage levels. We show results both using the softmax output of the model and the credibility profile~$\bc^\dagger$.

Once again, we notice a trade-off between robustness and performance, as in~\citep{Tsipras19r}. When classifying every sample~($100\%$~coverage), the classifier based on~$\bc^\dagger$ has lower accuracy~(Figure~\ref{F:accuracy_app}). However, it can discriminate between certain and uncertain predictions in a way that the softmax output of the model cannot~(as seen for~$\bc^o$ in Figure~\ref{F:cred}). In order to achieve the same accuracy as the unmodified model, the credibility-based filtered classifier must reduce its coverage to approximately~$80\%$. However, the overconfidence of the model output in its predictions makes it susceptible to perturbations. We illustrate this point by applying the different classifiers to test images corrupted by adversarial noise. The attacks shown here use the same parameters as described in the paper, except for the perturbation magnitudes that are twice as large. Still, we observe the same pattern: even for the weakest attack, the accuracy of the softmax output drops by almost~$60\%$~(see Figure~\ref{F:accuracy_app}), to the point that it would need to reduce its coverage by approximately~$40\%$ to recover the accuracy of the credibility-based classifier with full coverage~(Figure~\ref{F:filtering_softmax_app}).

\begin{figure}[tb]
\begin{minipage}[c]{0.48\columnwidth}
\centering
\includegraphics[width=\columnwidth]{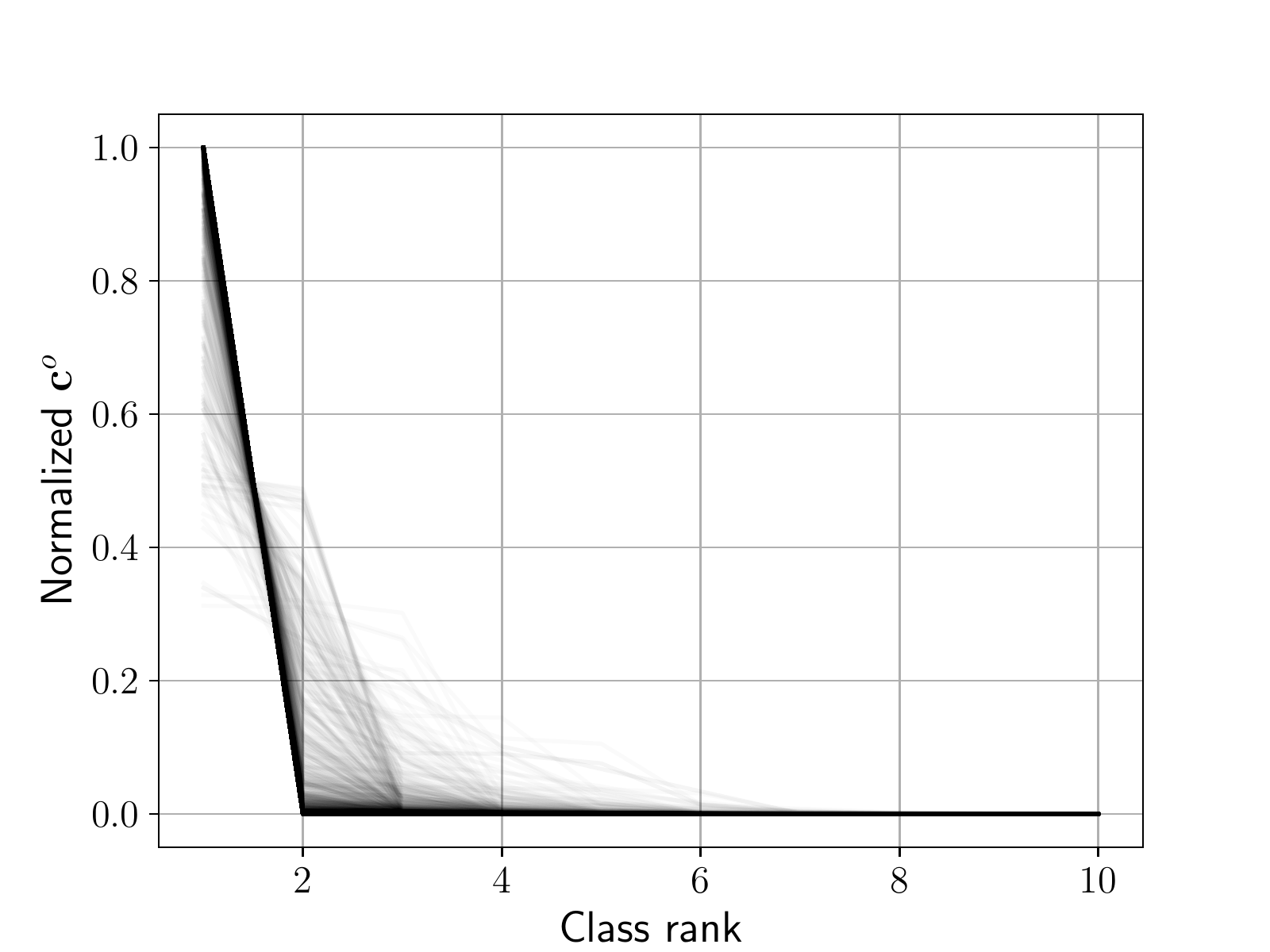}
\end{minipage}
\hfill
\begin{minipage}[c]{0.48\columnwidth}
\centering
\includegraphics[width=\columnwidth]{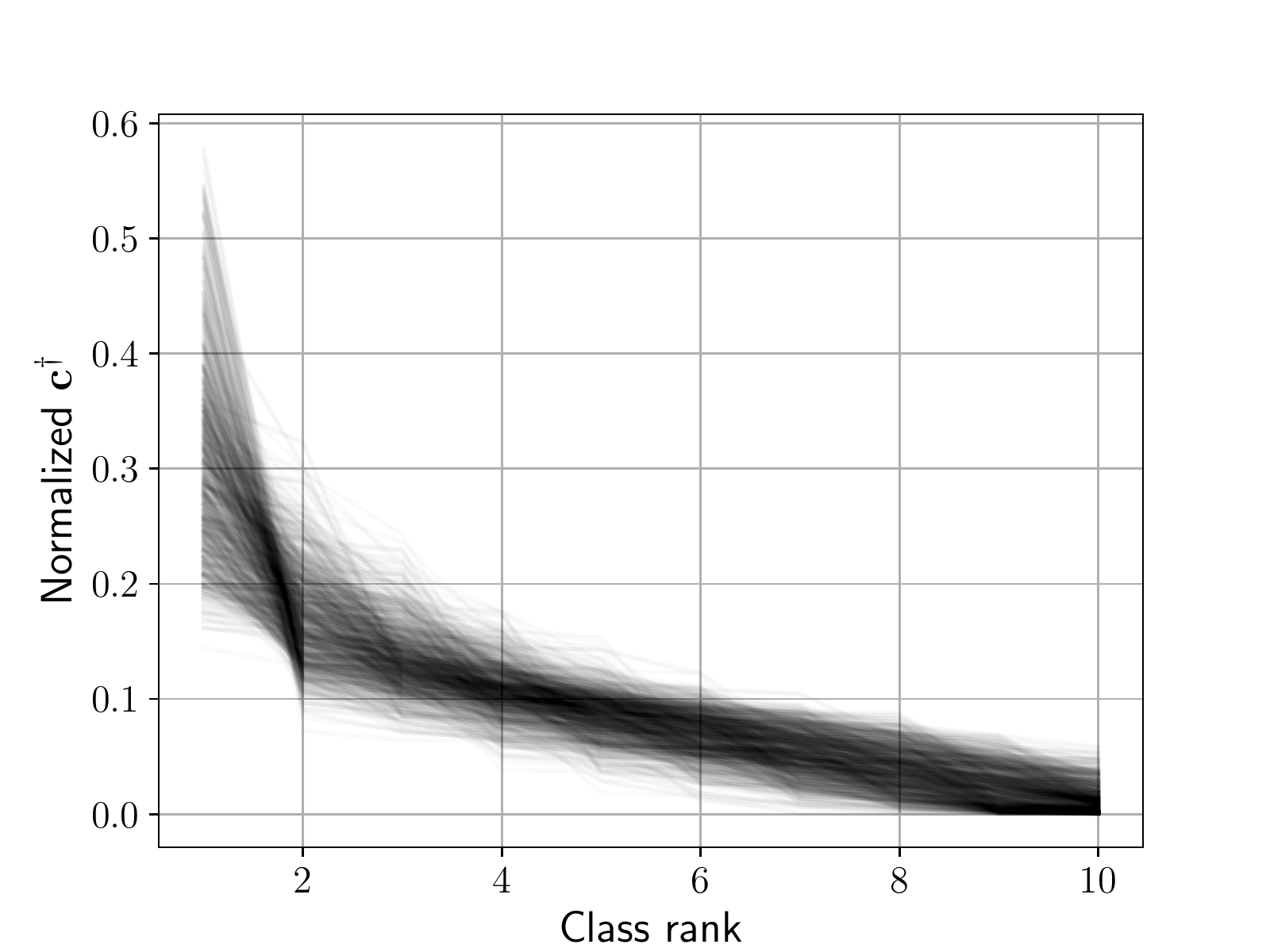}
\end{minipage}
\caption{Credibility profiles for ResNet18 on Fashion MNIST}
	\label{F:cred}
\end{figure}

\begin{figure}[tb]
\begin{minipage}[c]{0.48\columnwidth}
\centering
\includegraphics[width=\columnwidth]{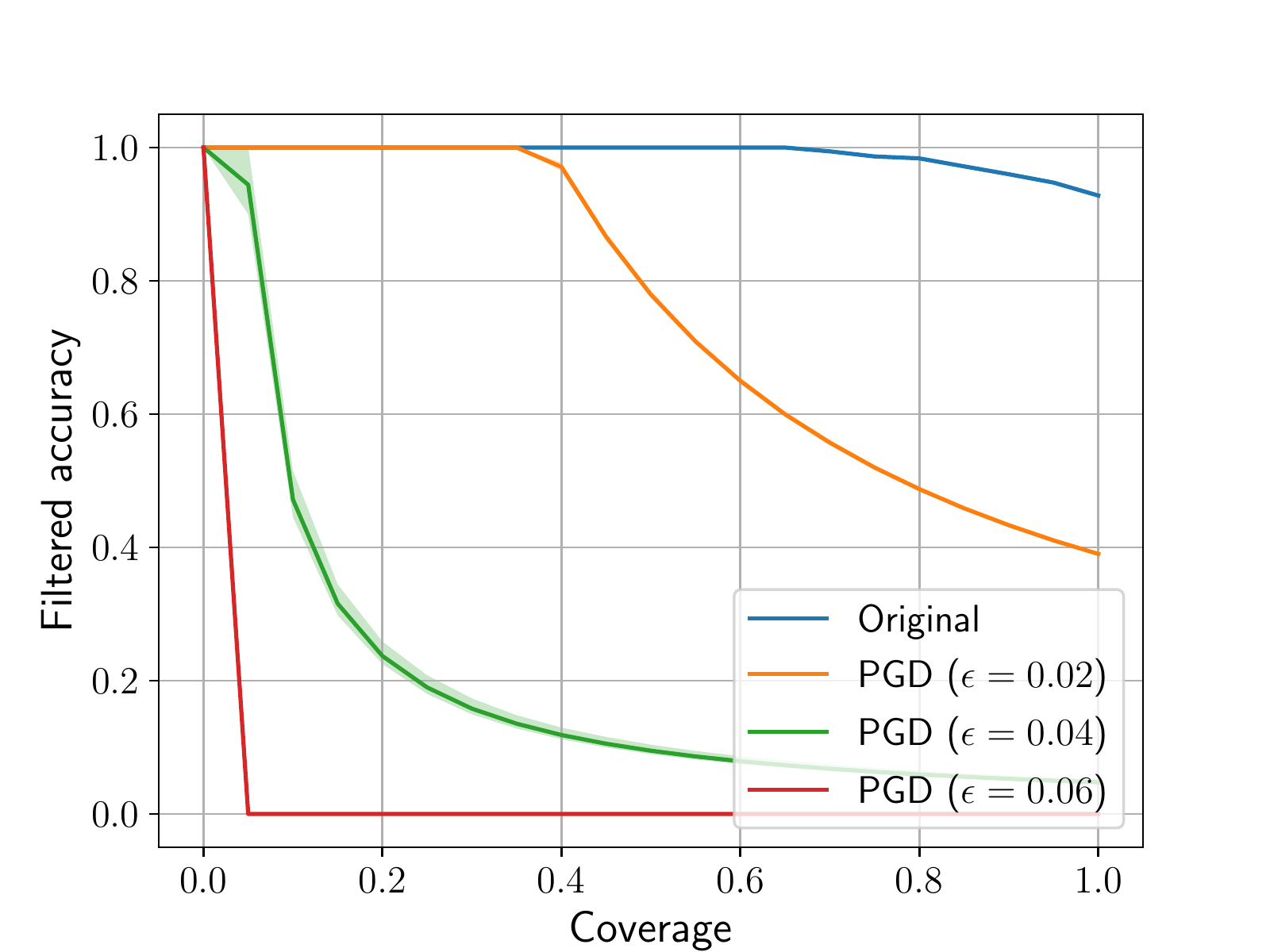}
\caption{Performance of softmax-based filtered classifier}
\label{F:filtering_softmax_app}
\end{minipage}
\hfill
\begin{minipage}[c]{0.48\columnwidth}
\centering
\includegraphics[width=\columnwidth]{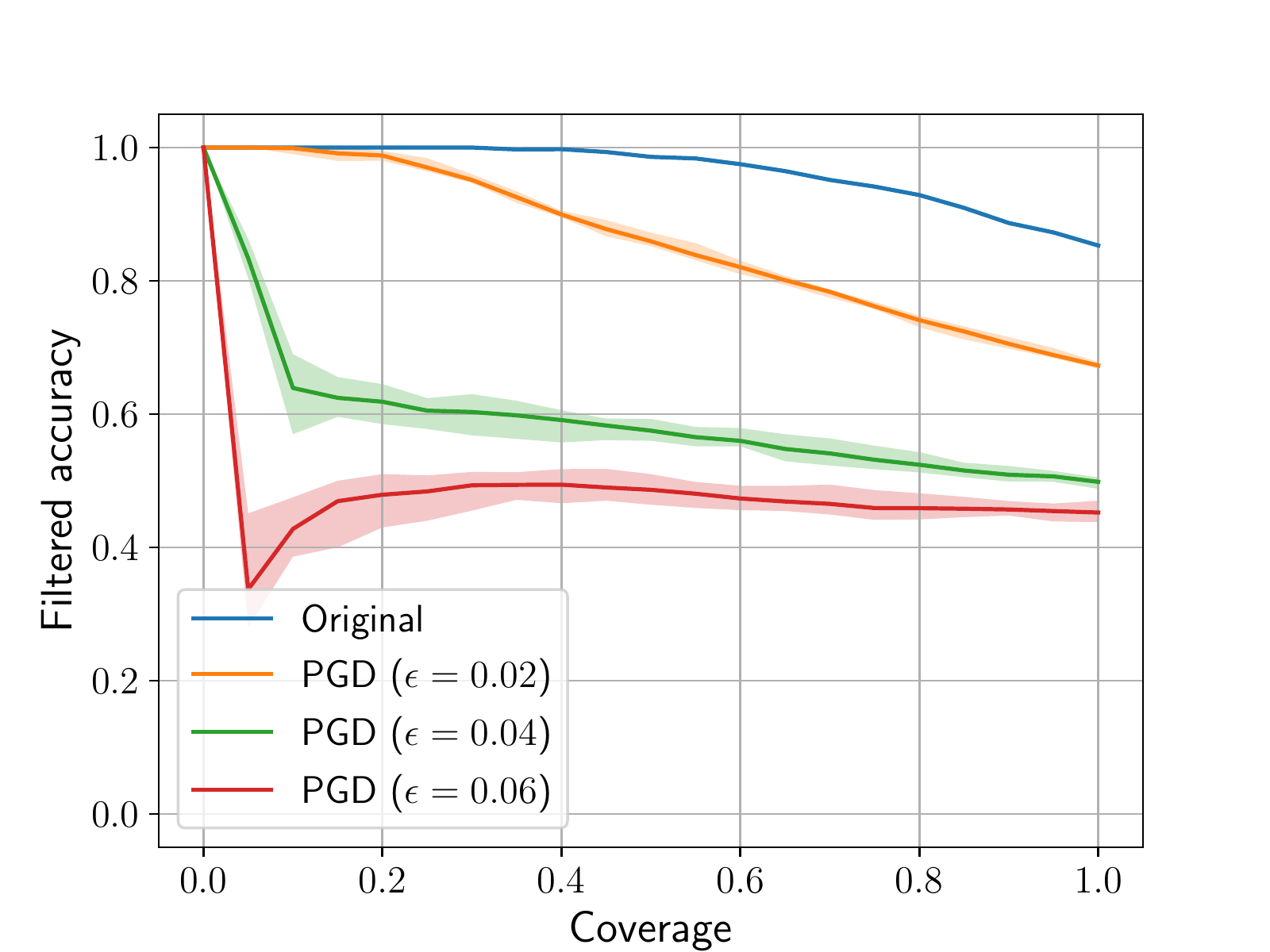}
\caption{Performance of credibility-based filtered classifier}
\label{F:filtering_cred_app}
\end{minipage}
\end{figure}

\begin{figure}[tb]
\centering
\includegraphics[width=0.48\columnwidth]{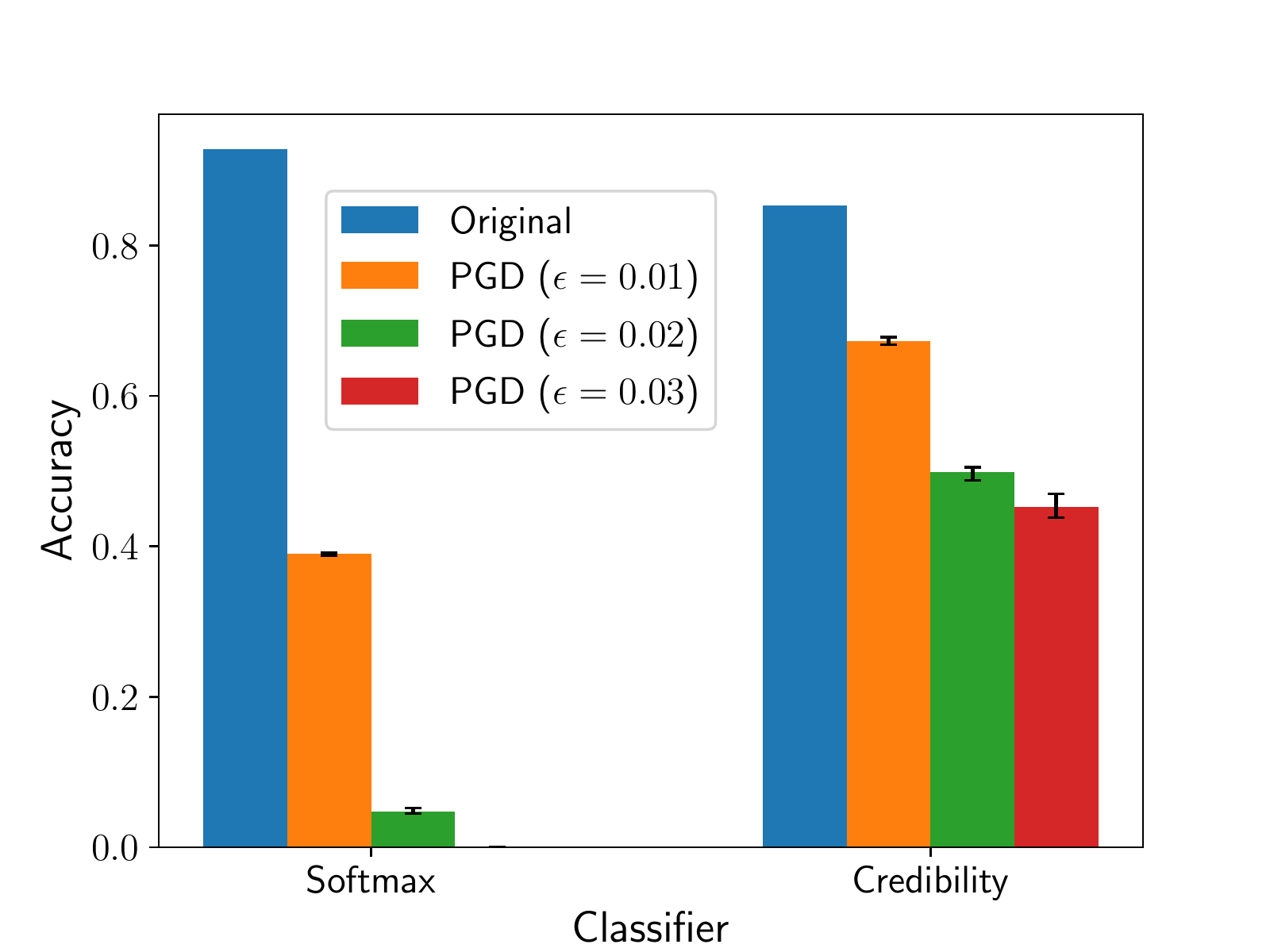}
\caption{Accuracy of softmax- and credibility-based classifiers}
\label{F:accuracy_app}
\end{figure}

\section{Proof of counterfactual results}
\label{X:counterfactual}

In this section, we prove different versions of the counterfactual result in Section~5.2. We begin with a simple lemma showing the existence of a local credibility profile satisfying~\eqref{E:compromise2}.

\begin{lemma}

There exists a local credibility profile~$\bc^\dagger$ that satisfies~\eqref{E:compromise2}.

\end{lemma}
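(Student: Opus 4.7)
The plan is to recast the local compromise~\eqref{E:compromise2} as a joint local minimization over both the input and the credence profile, reduce it to an unconstrained problem in $\bx$ alone, and then invoke coercivity to produce a minimizer.

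\textbf{Step 1: Joint reformulation.} Rearranging~\eqref{E:compromise2}, the pair $(\bx^\dagger,\bc^\dagger)$ is a local credibility profile if and only if
\begin{equation*}
  \norm{\bx^\dagger - \bx^o}^2 + \norm{\bc^\dagger}_{\bW^{-1}}^2
  \;\leq\; \norm{\bx^\prime - \bx^o}^2 + \norm{\bc^\prime}_{\bW^{-1}}^2
\end{equation*}
for every pair $(\bx^\prime,\bc^\prime)$ with $\bx^\prime$ close to $\bx^\dagger$ and $\ell_k\big(\phi^o(\bx^\prime)\big) \leq -c_k^\prime$ for all $k \in \calK$. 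In other words, $(\bx^\dagger,\bc^\dagger)$ should be a local minimizer of $F(\bx,\bc)=\norm{\bx-\bx^o}^2+\norm{\bc}_{\bW^{-1}}^2$ subject to the constraints of~\eqref{P:primal}.

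\textbf{Step 2: Elimination of $\bc$.} For fixed $\bx$, the minimum of $\sum_k w_k^{-1} c_k^2$ over $c_k \leq -\ell_k(\phi^o(\bx))$ is attained at $c_k = -\ell_k(\phi^o(\bx))$, because the feasible interval lies in $(-\infty,0]$ (losses are non-negative) and $c_k^2$ is minimized at the right endpoint of that interval. Substituting back, the joint problem reduces to the unconstrained minimization of
\begin{equation*}
  G(\bx) \;=\; \norm{\bx-\bx^o}^2 + \sum_{k\in\calK} \frac{\ell_k\big(\phi^o(\bx)\big)^2}{w_k}.
\end{equation*}

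\textbf{Step 3: Existence of a minimizer of $G$.} Under Assumption~\ref{A:diff}, $G$ is continuous. Moreover $G(\bx) \geq \norm{\bx-\bx^o}^2$, so the sublevel set $\{\bx : G(\bx) \leq G(\bx^o)\}$ is closed and bounded, hence compact. Thus $G$ attains a (global, in particular local) minimum at some $\bx^\dagger \in \setR^p$. Define $c_k^\dagger = -\ell_k\big(\phi^o(\bx^\dagger)\big)$.

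\textbf{Step 4: Verifying Def.~\ref{D:local_cred}.} It remains to check that $\bx^\dagger$ is itself a local minimizer of~\eqref{P:primal} with credences $\bc^\dagger$, since the definition demands this. Take any $\bx$ near $\bx^\dagger$ that is feasible, i.e., $\ell_k(\phi^o(\bx)) \leq -c_k^\dagger = \ell_k(\phi^o(\bx^\dagger))$. Squaring preserves this inequality on $[0,\infty)$, so $\sum_k w_k^{-1}\ell_k(\phi^o(\bx))^2 \leq \sum_k w_k^{-1}\ell_k(\phi^o(\bx^\dagger))^2$. Combined with $G(\bx) \geq G(\bx^\dagger)$, this gives $\norm{\bx-\bx^o}^2 \geq \norm{\bx^\dagger-\bx^o}^2$, establishing local optimality in~\eqref{P:primal}. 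By construction the compromise~\eqref{E:compromise2} then holds, completing the proof.

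The only mildly delicate point is Step~4: we must make sure that the pair produced by the unconstrained minimization of $G$ truly witnesses Def.~\ref{D:local_cred}, which imposes the two-sided requirement that $\bx^\dagger$ be a local minimizer of~\eqref{P:primal}. This is handled by the monotonicity of $t \mapsto t^2$ on $[0,\infty)$, which is available because $\ell_k$ takes non-negative values.
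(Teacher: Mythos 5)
Your proposal is correct, and at its core it is the same argument the paper uses: the paper's proof also passes through the joint problem of minimizing $\norm{\bx - \bx^o}^2 + \norm{\bc}_{\bW^{-1}}^2$ over the feasible set of~\eqref{P:primal} (problem~\eqref{P:both} in the appendix) and observes that any local minimizer of that joint problem satisfies~\eqref{E:compromise2}, arguing by contradiction. Where you differ is in the level of rigor rather than the route: the paper simply posits a local minimizer $(\bxb,\bcb)$ of~\eqref{P:both}, whereas your Steps 2--3 actually establish its existence by eliminating $\bc$ (using that $c_k^2$ is decreasing on $(-\infty,0]$, so the optimal credence saturates the constraint) and then invoking coercivity of the reduced objective $G$; your Step 4 likewise makes explicit the verification, left implicit in the paper, that the resulting $\bx^\dagger$ is a local minimizer of~\eqref{P:primal} with credences $\bc^\dagger$, so that $r^\dagger(\bc^\dagger)=\norm{\bx^\dagger-\bx^o}^2$ is well defined and Def.~\ref{D:local_cred} is genuinely witnessed. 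The one place you are terse is the final claim that the compromise ``then holds by construction'': it deserves the one-line chain $\norm{\bx^\dagger-\bx^o}^2+\norm{\bc^\dagger}_{\bW^{-1}}^2=G(\bx^\dagger)\le G(\bx^\prime)\le\norm{\bx^\prime-\bx^o}^2+\norm{\bc^\prime}_{\bW^{-1}}^2$, where the last step again uses $0\le\ell_k(\phi^o(\bx^\prime))\le -c_k^\prime$; with that spelled out, your argument is complete and in fact slightly stronger than the paper's, since it produces a global minimizer of the joint problem rather than assuming a local one.
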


\begin{proof}
Let~$(\bxb,\bcb)$ be a local minimizer pair of the optimization problem
\begin{prob}[\textup{P-A}]\label{P:both}
	\minimize_{\substack{\bx \in \setR^p \\ \bc \in\setR^\abs{\calK}}}&
		&&\norm{\bx - \bx^o}^2 + \norm{\bc}^2_{\bW^{-1}}
	\\
	\subjectto& &&\ell(\phi^o(\bx), k) \leq -c_k
		\text{,} \quad k \in \calK
		\text{.}
\end{prob}
If no credibility profile~$\bc$ satisfies~\eqref{E:compromise2}, then arbitrarily close to~$\bcb$ there exists~$(\bx^\prime,\bc^\prime)$ feasible for~(PI) such that
\begin{equation}\label{E:violation}
	\norm{\bxb - \bx^o}^2 + \norm{\bcb}^2_{\bW^{-1}} =
		r^\dagger(\bcb) + \norm{\bcb}^2_{\bW^{-1}} >
		\norm{\bx^\prime - \bx^o}^2 + \norm{\bc^\prime}^2_{\bW^{-1}}
		\text{.}
\end{equation}
However, observe that the feasibility set of~(PI) is contained in the feasibility set of~\eqref{P:both}, so that~$(\bx^\prime,\bc^\prime)$ is also~\eqref{P:both}-feasible. This leads to a contradiction since~\eqref{E:violation} violates the fact that~$(\bxb,\bcb)$ are local minimizers of~\eqref{P:both}.
\end{proof}

Thus, the existence of a local credibility profile is not an issue. Let us then complete the proof of Theorem~2.

\begin{proof}[Proof of Theorem~2]

Under Assumption~1 and for~$\bc \in \calC$ satisfying the hypothesis in~\eqref{E:counterfactual}, the~$\bx^\dagger(\bc)$ that achieves the local minimum~$r^\dagger(\bc)$ satisfies the KKT conditions~(Theorem~1) with~$\lambda_k = -2 w_k^{-1} c_k$. Explicitly,
\begin{equation}\label{E:cf_kkt}
	2 \big( \bx^\dagger(\bc) - \bx^o \big)
		- 2 \sum_{k=1}^\abs{\calK} \frac{c_k}{w_k}
			\nabla_{\bx} \ell_k\big( \phi^o\big(\bx^\dagger(\bc)\big)\big) = 0
	\quad \text{and} \quad
	\frac{2 c_k}{w_k} \left[ \ell_k\big( \phi^o\big(\bx^\dagger(\bc)\big)\big) + c_k \right] = 0
		\text{, for all } k \in \calK
		\text{.}
\end{equation}
Taking~$(\bx^\prime,\bc^\prime)$ to be any feasible pair for~(PI), i.e., $\ell_k(\phi^o(\bx^\prime)) \leq -c^\prime_k$ for all~$k \in \calK$, for~$\bx^\prime \in \calN$ as in Assumption~2, we can combine~\eqref{E:cf_kkt} to get
\begin{align*}
	r^\dagger(\bc) &= \norm{\bx^\dagger(\bc) - \bx^o}^2
		- 2 \sum_{k \in \calK} \frac{c_k}{w_k}
			\left[ \ell_k\big( \phi^o\big(\bx^\dagger(\bc)\big)\big) + c_k \right]
	\\
	{}&+ 2 \big( \bx^\prime - \bx^\dagger(\bc) \big)^T \left[
			\big( \bx^\dagger(\bc)-\bx^o \big)
			- \sum_{k \in \calK} \frac{c_k}{w_k}
				\nabla_{\bx} \ell_k\big( \phi^o\big(\bx^\dagger(\bc)\big)\big) \right]
		\text{,}
\end{align*}
which can then be rearranged to read
\begin{equation}\label{E:subgrads}
\begin{aligned}
	r^\dagger(\bc) &= \norm{\bx^\dagger(\bc) - \bx^o}^2
		+ 2\big( \bx^\prime - \bx^\dagger(\bc) \big)^T \big( \bx^\dagger(\bc) - \bx^o \big)
	\\
	{}&- 2\sum_{k \in \calK} \frac{c_k}{w_k} \bigg[ \ell_k\big( \phi^o\big(\bx^\dagger(\bc)\big)\big)
			+ \big( \bx^\prime - \bx^\dagger(\bc) \big)^T
				\nabla_{\bx} \ell_k\big( \phi^o\big(\bx^\dagger(\bc)\big)\big) + c_k \bigg]
		\text{.}
\end{aligned}
\end{equation}

To proceed, use the convexity of~$\norm{\cdot}^2$ to bound the first two terms in~\eqref{E:subgrads} and obtain
\begin{equation}\label{E:subgrads2}
	r^\dagger(\bc) \leq \norm{\bx^\prime - \bx^o}^2
		- 2 \sum_{k \in \calK} \frac{c_k}{w_k} \bigg[ \ell_k\big( \phi^o\big(\bx^\dagger(\bc)\big)\big)
			+ \big( \bx^\prime - \bx^\dagger(\bc) \big)^T
				\nabla_{\bx} \ell_k\big( \phi^o\big(\bx^\dagger(\bc)\big)\big) + c_k \bigg]
		\text{.}
\end{equation}
Then, since~$\bx^\prime \in \calN$ from Assumption~2, we can use~\eqref{E:condition}\ on the bracketed quantity in~\eqref{E:subgrads2} to write the inequality
\begin{equation}\label{E:subgrads3}
	r^\dagger(\bc) - \frac{1}{2} \norm{\bx^\prime - \bx^o}^2
		\leq - 2 \sum_{k \in \calK} \frac{c_k}{w_k} \bigg[
			\ell_k\big( \phi^o(\bx^\prime) \big)
			+ \frac{\left[ \ell_k\big( \phi^o(\bx^\prime) \big)
					- \ell_k\big( \phi^o\big(\bx^\dagger(\bc)\big)\big) \right]^2
			}{
				2 \ell_k\big( \phi^o\big(\bx^\dagger(\bc)\big)\big)
			}
			+ c_k \bigg]
		\text{.}
\end{equation}
Expanding~\eqref{E:subgrads3} yields
\begin{equation*}
	r^\dagger(\bc) - \norm{\bx^\prime - \bx^o}^2 \leq
		- \sum_{k \in \calK} \frac{
			c_k \ell_k\big( \phi^o(\bx^\prime) \big)^2
		}{
			w_k \ell_k\big( \phi^o\big(\bx^\dagger(\bc)\big)\big)
		}
		- \sum_{k \in \calK}
			\frac{c_k \ell_k\big( \phi^o\big(\bx^\dagger(\bc)\big)\big)}{w_k}
		- 2 \norm{\bc}_{\bW^{-1}}^2
		\text{,}
\end{equation*}
where we used the fact that~$\norm{\bc}_{\bW^{-1}}^2 = \sum_k w_k^{-1} c_k^2$. Since~$\ell_k\big( \phi^o\big(\bx^\dagger(\bc)\big)\big) \leq -c_k$ and~$\ell_k\big( \phi^o\big(\bx^\dagger(\bc^\prime)\big)\big) \leq -c_k^\prime$, we get
\begin{equation*}
	r^\dagger(\bc) - \norm{\bx^\prime - \bx^o}^2 \leq
		\norm{\bc^\prime}_{\bW^{-1}}^2 - \norm{\bc}_{\bW^{-1}}^2
		\text{.}
\end{equation*}
Hence, if~$\bW^{-1} \bc$ is a dual variable of~(PI), then~$\bc$ satisfies~\eqref{E:compromise2}, i.e., $\bc = \bc^\dagger$.
\end{proof}

In the particular case in which~(PI) is a convex program, we can show that~\eqref{E:counterfactual}\ is both sufficient and necessary:

\begin{proposition}

Assume that~$\ell_k$ is convex and non-decreasing for all~$k \in \calK$ and~$\phi^o$ is a convex function of~$\bx$~(e.g., linear). Let~$r^\dagger(\bc)$ be a local minimum of~(PI) for~$\bc \in \calC$ achieved by~$\bx^\dagger(\bc)$ with value associated with the dual variables~$\blambda^\dagger(\bc)$. Then, under Assumption~1, the local credibility profile~$\bc^\dagger$ from~\eqref{E:compromise2}\ is a global profile satisfying~\eqref{E:compromise}\ and
\begin{equation}\label{E:cxvCounterfactual}
	\bc = -\frac{1}{2} \bW \blambda^\star(\bc)
	\Leftrightarrow
	\bc \text{ satisfies~\eqref{E:compromise}, i.e., } \bc = \bc^\star
		\text{.}
\end{equation}

\end{proposition}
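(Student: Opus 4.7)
The plan is to leverage the convex reformulation~\eqref{P:both} introduced in the preceding lemma, which jointly minimizes over~$\bx$ and~$\bc$ and whose optimizers are exactly the pairs~$(\bx^\star,\bc^\star)$ satisfying~\eqref{E:compromise}. Under the convexity hypotheses, each composition~$\ell_k\big(\phi^o(\cdot)\big)$ is convex (convex non-decreasing composed with convex), so both~(PI) and~\eqref{P:both} are convex programs, and~$\bc \in \calC$ guarantees Slater's condition. KKT is then necessary and sufficient for global optimality and, by standard convex duality, the subdifferential~$\partial r^\star(\bc)$ coincides with the set of dual optima~$\blambda^\star(\bc)$ (which need not be a singleton).

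For the direction ``counterfactual implies credibility'' I would mirror the chain of inequalities~\eqref{E:subgrads}--\eqref{E:subgrads3} from the proof of Theorem~\ref{T:counterfactual}, but replace the quadratic bound of Assumption~\ref{A:nonconvex} with the tighter standard gradient inequality
\begin{equation*}
	\ell_k\big(\phi^o(\bx')\big) \geq \ell_k\big(\phi^o(\bx^\star)\big)
		+ (\bx' - \bx^\star)^T \nabla_\bx \ell_k\big(\phi^o(\bx^\star)\big)
\end{equation*}
that convexity affords. The quadratic correction term then vanishes and, using~$c_k^\star \leq 0$ together with the feasibility bound~$\ell_k\big(\phi^o(\bx')\big) \leq -c_k'$, one obtains
\begin{equation*}
	r^\star(\bc^\star) - \norm{\bx' - \bx^o}^2 \leq
		2 \sum_{k \in \calK} \frac{c_k^\star c_k'}{w_k} - 2 \norm{\bc^\star}_{\bW^{-1}}^2
		\text{.}
\end{equation*}
Completing the square via the elementary identity~$\sum_k w_k^{-1}(c_k^\star - c_k')^2 \geq 0$ bounds the right-hand side by~$\norm{\bc'}_{\bW^{-1}}^2 - \norm{\bc^\star}_{\bW^{-1}}^2$, giving~\eqref{E:compromise} globally.

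For the reverse direction, the preceding lemma identifies any global credibility profile~$\bc^\star$ with a global minimizer~$(\bx^\star,\bc^\star)$ of~\eqref{P:both}. I would apply KKT to~\eqref{P:both} with Lagrangian
\begin{equation*}
	\calL(\bx,\bc,\boldsymbol{\mu}) = \norm{\bx-\bx^o}^2 + \norm{\bc}_{\bW^{-1}}^2
		+ \sum_{k \in \calK} \mu_k \big[ \ell_k(\phi^o(\bx)) + c_k \big]
		\text{.}
\end{equation*}
Stationarity in~$\bc$ directly yields~$\boldsymbol{\mu}^\star = -2\bW^{-1}\bc^\star$, with dual feasibility automatic since~$c_k^\star \leq 0$. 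The remaining stationarity in~$\bx$ and complementary slackness conditions coincide exactly with~\eqref{E:kkt} of~(PI) at fixed credences~$\bc^\star$, certifying~$\boldsymbol{\mu}^\star$ as a valid KKT multiplier of~(PI), i.e., a legitimate choice for~$\blambda^\star(\bc^\star)$. This delivers~$\bc^\star = -\tfrac{1}{2}\bW \blambda^\star(\bc^\star)$.

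The main subtlety will be dual non-uniqueness, since convex programs under Slater alone need not have unique multipliers; the proposition's reference to~$\blambda^\star(\bc)$ must therefore be read as the existence of an associated dual satisfying the relation, which is precisely what the KKT derivation for~\eqref{P:both} produces via the identification~$\partial r^\star(\bc^\star) = \{\blambda^\star(\bc^\star)\}$.
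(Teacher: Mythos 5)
Your proposal is correct, but it follows a genuinely different route from the paper's own proof. The paper argues entirely in credence space: strong duality makes $\blambda^\star(\bc)$ a subgradient of the perturbation function $r^\star$, and summing the subgradient inequality for $r^\star$ with the gradient inequality for $\norm{\cdot}_{\bW^{-1}}^2$ kills the cross term under the hypothesis $\blambda^\star(\bc) + 2\bW^{-1}\bc = \bzero$, yielding~\eqref{E:compromise} directly; for the converse it notes that any $\bc^\star$ satisfying~\eqref{E:compromise} minimizes $q(\bc) = r^\star(\bc) + \norm{\bc}_{\bW^{-1}}^2$ and invokes $\blambda^\star = \nabla r^\star$. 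You instead re-run the primal chain of inequalities from the proof of Theorem~\ref{T:counterfactual} with the convex gradient inequality replacing Assumption~\ref{A:nonconvex} (so the quadratic correction vanishes) and close by completing the square, and you obtain the converse by reading the multiplier off the $\bc$-stationarity of the KKT conditions of~\eqref{P:both}---the same device the paper uses in its proof of Proposition~\ref{T:map}. The paper's argument is shorter and packages the result as a pure sensitivity statement, but it tacitly assumes $r^\star$ is differentiable (equivalently, dual uniqueness), which Slater alone does not guarantee; your version avoids the perturbation function altogether and your explicit treatment of dual non-uniqueness is the more careful reading. To make your sketch complete you should spell out (i)~that a global profile $\bc^\star$ does give a global minimizer of~\eqref{P:both}, which is immediate since for any feasible pair $(\bx',\bc')$ the profile induced by $\bx'$ has no larger $\bW^{-1}$-norm than $\bc'$, and (ii)~a constraint qualification for~\eqref{P:both}, which holds trivially because $c_k$ can always be chosen strictly below $-\ell_k\big(\phi^o(\bx)\big)$.
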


\begin{proof}

The first part of the theorem is immediate: since~$\phi$ is convex and~$\ell$ is a non-decreasing convex function, (PI) is a strongly convex problem. Hence, any local minimizer~$\bx^\dagger$ is a global minimizer~$\bx^\star$~\citep{Boyd04c}.

We proceed by proving necessity~($\Rightarrow$). Since~(PI) is a strongly convex, strictly feasible problem~[$\bc \in \calC$ in~\eqref{E:slater}], it is strongly dual and its dual variables~$\blambda^\star(\bc)$ is a subgradient of its perturbation function~$r^\star$~\citep[Section 5.6.2]{Boyd04c}. We therefore obtain
\begin{equation}\label{E:convex_inequalities}
\begin{aligned}
	r^\star(\bc_0) &\geq r^\star(\bc) + \blambda^\star(\bc)^T (\bc_0 - \bc)
	\\
	\norm{\bc_0}^2_{\bW^{-1}} &\geq \norm{\bc}^2_{\bW^{-1}}
		+ 2 \bc^T \bW^{-1} (\bc_0 - \bc)
		\text{,}
\end{aligned}
\end{equation}
which when summed read
\begin{equation}\label{E:summed_inequalities}
	r^\star(\bc) - r^\star(\bc_0) \leq
		\norm{\bc_0}^2_{\bW^{-1}} - \norm{\bc}^2_{\bW^{-1}}
		+ \left[ \blambda^\star(\bc) + 2 \bW^{-1} \bc \right]^T (\bc_0 - \bc)
		\text{.}
\end{equation}
Using the hypothesis in~\eqref{E:cxvCounterfactual} yields~$\blambda^\star(\bc) + 2 \bW^{-1} \bc = \bzero$ and~\eqref{E:summed_inequalities} reduces to~\eqref{E:compromise}, which implies~$\bc = \bc^\star$.

To prove the sufficiency part~($\Leftarrow$) in~\eqref{E:cxvCounterfactual}, notice from~\eqref{P:both} that any~$\bc^\star$ that satisfies~\eqref{E:compromise}\ is the minimizer of the strongly convex function~$q(\bc) = r^\star(\bc) + \norm{\bc}_{\bW^{-1}}^2$. As such, it must be that
\begin{equation*}
	\nabla q(\bc^\star) = \bzero \Leftrightarrow \nabla r^\star(\bc^\star) + 2 \bW^{-1} \bc^\star = \bzero
		\text{.}
\end{equation*}
Once again leveraging the fact that~(PI) is strongly dual, it holds that~$\blambda^\star(\bc) = \nabla p^\star(\bc)$ for all~$\bc$~\citep[Section 5.6.3]{Boyd04c}, thus concluding the proof.
\end{proof}


\section{Proof of Proposition~1}
\label{X:map}

Recall the Bayesian formulation from Section~6:
\begin{align*}
	\Pr\left( \bx \mid \bc \right) &=
		\calN\left( \bx \mid \bx^o, (2t)^{-1} \bI \right)
		\times \prod_{c_k \neq 0}
			\text{SE}\left[ \ell(\phi^o(\bx), k) \mid c_k, \frac{2 c_k}{w_k} t \right]
	\\
	\Pr\left( \bc \right) &= \calN\left( \bc \mid \zeros, (2t)^{-1} \bW \right)
\end{align*}
We then obtain the joint distribution~$\Pr(\bx,\bc)$ as
\begin{equation}\label{E:joint}
\begin{aligned}
	 \Pr(\bx,\bc) &\propto
	 	\exp\left( -t \norm{\bx - \bx^o}^2 \right) \times
	 	\prod_{c_k \neq 0} \frac{2 c_k}{w_k} t
	 		\exp\bigg( -\frac{2 c_k}{w_k} t \left[ \ell_k(\phi^o(\bx) - c_k \right] \bigg)
	 	\exp\left( - t \norm{\bc}^2_{\bW^{-1}} \right)
	 \\
	 {}&= \exp\left(
	 	-t \norm{\bx - \bx^o}^2
	 	-t \sum_{c_k \neq 0} \frac{2c_k}{w_k} \left[ \ell_k(\phi^o(\bx) - c_k \right]
		- t \norm{\bc}^2_{\bW^{-1}}
	 	+ \sum_{c_k \neq 0} \log\left( \frac{2c_k}{w_k} t \right)
	 \right)
		 \text{.}
\end{aligned}
\end{equation}

Notice that since~$\bx^\dagger(\bc^\dagger)$ achieves the local minimum~$r^\dagger(\bc^\dagger)$ that satisfies Def.~2, the pair~$\big( \bx^\dagger(\bc^\dagger), \bc^\dagger \big)$ is a local minimizer of~\eqref{P:both}. Indeed, \eqref{E:compromise2}\ can be rearranged as in
\begin{equation*}
	\norm{\bx^\dagger(\bc^\dagger) - \bx^o}^2 + \norm{\bc^\dagger}_{\bW^{-1}}^2
		\leq \norm{\bx^\prime - \bx^o}^2 + \norm{\bc^\prime}_{\bW^{-1}}^2
		\text{,}
\end{equation*}
for~$\bx^\prime$ in a neighborhood of~$\bx^\dagger(\bc^\dagger)$ and~$(\bx^\prime,\bc^\prime)$ (PI)-feasible. Hence, they satisfy the KKT conditions~(Theorem~1) for~\eqref{P:both}. Indeed, note that~\eqref{P:both} always has a strictly feasible pair~$(\bxh,\bch)$ obtained by taking~$[\bch]_k = \ell_k\big( \phi^o(\bx) \big)$. Explicitly, there exists~$\blambda$ such that
\begin{align}
	2 \big( \bx^\dagger-\bx^o \big)
		+ \sum_{k=1}^\abs{\calK} \lambda_k
			\nabla_{\bx} \ell_k\big( \phi^o\big(\bx^\dagger(\bc^\dagger)\big)\big) &= 0
		\label{E:grad_x}
		\text{,}
	\\
	2 \bW^{-1} \bc^\dagger + \blambda &= 0
		\label{E:grad_c}
		\text{,}
	\\
	\lambda_k \left[ \ell_k\big( \phi^o\big(\bx^\dagger(\bc^\dagger)\big)\big) + c_k \right] &= 0
		\text{, for all } k \in \calK
		\label{E:slackness2}
		\text{.}
\end{align}
Observe that~\eqref{E:grad_x} and~\eqref{E:grad_c} arise by applying~\eqref{E:kkt_stationary}\ taking derivatives with respect to~$\bx$ and~$\bc$ respectively and~\eqref{E:slackness2} is the complementary slackness condition~\eqref{E:kkt_comp_slack}. Using~\eqref{E:grad_c} and~\eqref{E:slackness2}, we additionally conclude that
\begin{equation}\label{E:vanishes}
	\sum_{k \in \calK} \frac{2c_k}{w_k} \left[ \ell_k\big( \phi^o\big(\bx^\dagger(\bc^\dagger)\big)\big) + c_k \right] = 0
		\text{.}
\end{equation}
Using~\eqref{E:vanishes}, the joint probability distribution~\eqref{E:joint} evaluated at~$(\bx^\dagger(\bc^\dagger), \bc^\dagger)$ reduces to
\begin{equation}
\begin{aligned}
	 \Pr\big( \bx^\dagger(\bc^\dagger),\bc^\dagger \big) &\propto \exp\left[
	 	-t \norm{\bx^\dagger(\bc^\dagger) - \bx^o}^2
		- t \norm{\bc^\dagger}^2_{\bW^{-1}}
	 	+ \sum_{c_k \neq 0} \log\left( \frac{2c_k}{w_k} t \right)
	 \right]
		 \text{.}
\end{aligned}
\end{equation}

Suppose now that there exists another point~$(\bx^\prime,\bc^\prime)$ arbitrarily close to~$\big( \bx^\dagger(\bc^\dagger), \bc^\dagger \big)$ such that~$\Pr\big(\bx^\dagger(\bc^\dagger),\bc^\dagger\big) < \Pr(\bx^\prime,\bc^\prime)$ for all~$t$. This would imply that
\begin{equation*}
	\norm{\bx^\dagger(\bc^\dagger) - \bx^o}^2 + \norm{\bc^\dagger}^2_{\bW^{-1}}
		- \frac{1}{t} \sum_{c_k \neq 0} \log\left( \frac{2c_k}{w_k} t \right)
	>
	\norm{\bx^\prime - \bx^o}^2 + \norm{\bc^\prime}^2_{\bW^{-1}}
		- \frac{1}{t} \sum_{c_k^\prime \neq 0} \log\left( \frac{2c^\prime_k}{w_k} t \right)
\end{equation*}
for all~$t$ and since the last term is~$o(t)$, we eventually get
\begin{equation*}
	\norm{\bx^\dagger(\bc^\dagger) - \bx^o}^2
		+ \norm{\bc^\dagger}^2_{\bW^{-1}}
	>
	\norm{\bx^\prime - \bx^o}^2 + \norm{\bc^\prime}^2_{\bW^{-1}}
\end{equation*}
which violates~\eqref{E:compromise2}. Hence, $\big( \bx^\dagger(\bc^\dagger), \bc^\dagger \big)$ becomes a local maximum of the joint distribution~\eqref{E:joint} as~$t \to \infty$.
\hfill$\square$

\end{document}